\documentclass[journal, 10pt]{IEEEtran}
\ifCLASSINFOpdf
\else
\fi

\usepackage{amsmath, amssymb, amsthm}
\usepackage{hyperref}
\usepackage{color}
\usepackage{amsfonts,fancyhdr,mdframed, bm}
\usepackage[utf8]{inputenc}
\usepackage{caption}
\usepackage{subcaption}
\usepackage{graphicx}
\usepackage{nicematrix}
\usepackage{cite}
\usepackage{array}
\usepackage{stfloats}
\usepackage{color}
\usepackage{booktabs}
\usepackage{multirow, makecell}
\usepackage{bbm}
\usepackage{makecell}
\usepackage{siunitx}
\usepackage{algorithm}
\usepackage{algpseudocode}

\newcommand{\negSpace}{-20pt}

\newtheorem*{problem*}{Problem}
\newtheorem{assumption}{Assumption}
\newtheorem{remark}{Remark}
\newtheorem*{example*}{Example}
\newtheorem{definition}{Definition}

\newtheorem{theorem}{Theorem}
\hyphenation{op-tical net-works semi-conduc-tor}
\makeatletter
\NewCommandCopy\@@pmod\pmod
\DeclareRobustCommand{\pmod}{\@ifstar\@pmods\@@pmod}
\def\@pmods#1{\mkern4mu({\operator@font mod}\mkern 6mu#1)}
\makeatother

\DeclareMathOperator*{\argmin}{argmin}

\begin{document}

\title{Collision Probability Estimation for Optimization-based Vehicular Motion Planning}

\author{Leon Tolksdorf$^{1,2}$, Arturo Tejada$^{1, 3}$, Christian Birkner$^{2}$, and Nathan van de Wouw$^{1}$
\thanks{$^{1}$Department of Dynamics and Control, Eindhoven University of Technology, Eindhoven, The Netherlands, e-mail:
        {\tt\small \{l.t.tolksdorf, a.tejada.ruiz, n.v.d.wouw\}@tue.nl}}%
\thanks{$^{2}$CARISSMA Institute of Safety in Future Mobility, Technische Hochschule Ingolstadt, Ingolstadt, Germany, e-mail:
        {\tt\small \{leon.tolksdorf, christian.birkner\}@thi.de}}%
\thanks{$^{3}$TNO, Integrated Vehicle Safety, Helmond, The Netherlands, e-mail:
        {\tt\small arturo.tejadaruiz@tno.nl}}%
}

\maketitle

\begin{abstract}
Many motion planning algorithms for automated driving require estimating the probability of collision (POC) to account for uncertainties in the measurement and estimation of the motion of road users. Common POC estimation techniques often utilize sampling-based methods that suffer from computational inefficiency and a non-deterministic estimation, i.e., each estimation result for the same inputs is slightly different. In contrast, optimization-based motion planning algorithms require computationally efficient POC estimation, ideally using deterministic estimation, such that typical optimization algorithms for motion planning retain feasibility. Estimating the POC analytically, however, is challenging because it depends on understanding the collision conditions (e.g., vehicle's shape) and characterizing the uncertainty in motion prediction. In this paper, we propose an approach in which we estimate the POC between two vehicles by over-approximating their shapes by a multi-circular shape approximation. The position and heading of the predicted vehicle are modelled as random variables, contrasting with the literature, where the heading angle is often neglected. We guarantee that the provided POC is an over-approximation, which is essential in providing safety guarantees. For the particular case of Gaussian
uncertainty in the position and heading, we present a computationally efficient algorithm for computing the POC estimate. This algorithm is then used in a path-following stochastic model predictive controller (SMPC) for motion planning. With the proposed algorithm, the SMPC generates reproducible trajectories while the controller retains its feasibility in the presented test cases and demonstrates the ability to handle varying levels of uncertainty.
\end{abstract}

\begin{IEEEkeywords}
Collision Probability, Circular Approximations, Stochastic Model Predictive Control, Path-following
\end{IEEEkeywords}

\IEEEpeerreviewmaketitle

\section{Introduction}\label{sec_introduction}
Automated vehicles (AVs) usually operate with limited knowledge about their surroundings, which stems from limitations in sensing and estimation, e.g., range limitations or physical sight obstructions, noise in the sensed data, or missing information about the intentions of other road users. This limited knowledge, commonly referred to as uncertainty, is an important factor that should be incorporated in the design of an AV's motion planning and control algorithms as recommended in scientific literature (see, e.g., \cite{schwarting2018planning, McAllister.82017, hubmann2018automated, Tolksdorf_2025}) and safety standards \cite{ISO21448}. \\
The specifics of how to incorporate uncertainty in the design, however, depends on the particular algorithms and their implementation. Regarding motion planning and decision-making, uncertainty in the position and orientation of other actors and objects is understood as a challenge to ensure safety. Here, the probability of collision (POC) \cite{schwarting2017safe, goulet2022probabilistic, schreier2016integrated, althoff2009model} or related quantities, e.g., stochastic risk \cite{tolksdorf2023risk, hruschka2019uncertainty, nyberg2021risk}, are measures commonly used to ensure safety by constraining or minimizing them within a motion planning algorithm. \\
As mentioned, from an AV perspective, the positions and heading angles of other vehicles are uncertain. A common method to express this uncertainty is by treating the uncertain variables as random variables with specific associated probability density functions (PDFs). The POC for two arbitrarily shaped vehicles is then typically estimated with Monte Carlo sampling (MCS) \cite{lambert2008collision, schreier2016integrated}. That is, the POC is estimated by sampling the positions and orientations from their PDFs and checking whether these positions and orientations lead to a collision. The ratio of colliding to non-colliding samples then gives the POC estimate. We will refer to this method for POC estimation as MCS. Although applicable for any vehicle shape, MCS is computationally expensive as many samples are necessary for an accurate estimation; additionally, the POC estimate fluctuates around its true value for each sampling set. Hence, an inherent MCS drawback is that it is not guaranteed to \textit{not} under-approximate the POC for a finite amount of samples, which is generally undesirable in safety-critical applications.\\
To improve upon the MCS approach, various analytic methods for POC estimation have been proposed. For instance, using occupancy grids. That is, the motion planning space in $\mathbb{R}^2$ is discretized into cells, and the probability of occupation by both actors of a cell is calculated \cite{althoff2011comparison}, \cite{candela2023risk}. In \cite{althoff2011comparison}, the POC is approximated with Markov chains and compared with MCS, where it is concluded that MCS is the better approximation in accuracy and computational efficiency. 
Aside from occupancy grids, \cite{du2011probabilistic} chooses to model a robot and an obstacle as a circle each and formulates a joint Gaussian PDF of each vehicle's position to approximate the POC analytically. The authors of \cite{philipp2019analytic} consider rectangular-shaped vehicles, of which the position of one is known, whereas the position of the other vehicle is Gaussian distributed. From the distributions, the PDFs are numerically integrated to retrieve the POC; however, both vehicles' heading angles are assumed to be deterministic. The same approach, i.e., using rectangles with deterministic heading angles, is utilized by \cite{altendorfer2021new} to formulate the POC over a time interval, where an analytic solution is derived for an approximation of the POC. An analytic approximation of the POC is also proposed by \cite{patil2012estimating}; here, the vehicle's shape is assumed to be point-like and the uncertainty is Gaussian. \\
Another shape approximation in motion planning is the multi-circular approximation, where a rectangle is covered with overlapping circles \cite{ziegler2010fast, ziegler2014trajectory, schwarting2017safe, gutjahr2016lateral, manzinger2020using, werling2012optimal, micheli2023nmpc}. Computing the POC analytically for a multi-circular AV approximation has been presented in \cite{tolksdorf2024}, where it is also shown that the proposed analytic method significantly outperforms MCS in computational speed while a slight over-approximation of the true POC is guaranteed. In spite of the benefits, the proposed method only considers the AV to be approximated by multi-circles and all vehicles with whom it is potentially colliding by a single circle, avoiding the need to consider the heading angle, yet undermining the accuracy of this approximation. In \cite{mustafa2024racp}, the vehicle's shapes are each approximated by a two-circle covering, though the heading angle is assumed to be deterministic. Evidently, for estimating the POC for two vehicles of which the position \textit{and} orientation of one vehicle is uncertain, the main challenge is to guarantee not to underestimate the true POC while generating a not-too-conservative overestimation.\\ 
The motion plan of an AV can be generated with any of several techniques (see, e.g., \cite{Gonzalez.2016, Claussmann.2020}). Here, we focus on stochastic model predictive control (SMPC) as an optimization-based technique. SMPC allows for the flexible integration of the POC either as a constraint or within a cost function to be minimized. Further, path-following SMPC (see \cite{Faulwasser.2016} for details) has been shown to generate smooth trajectories while successfully adapting the motion plan to a given level of uncertainty \cite{tolksdorf2023risk}. The drawbacks of SMPC, however, are its high computational burden and the general lack of guarantee that it is recursively feasible. Further, the evaluation of probabilistic constraints (e.g., constraints based on the POC) is known to be challenging, as, in many cases, sampling-based estimation techniques (online MCS) are computationally heavy or analytic methods are strongly over-approximative, leading to overly conservative behavior \cite{mesbah2016stochastic}. Suppose, for instance, the POC is estimated with MCS within an optimization-based motion planner. If the optimizer converges towards the boundary of the POC constraint, a previously feasible point might suddenly become infeasible due to an unfavorable random draw \cite{homem2014monte}. Therefore, results, i.e., planned trajectories, are non-reproducible as the fluctuations in POC estimation are random, and the optimizer might struggle to converge as a consequence. \\
The main contributions of this paper are as follows: we address the aforementioned challenges for computing the POC by presenting a novel method based on multi-circular shape approximations for all actors to estimate the POC, particularly for motion-planning applications. This method overcomes the drawbacks of MCS, i.e., its high computational load, the lack of a guarantee to over-approximate the POC, and POC fluctuations causing the planning optimizers to fail. We present the necessary geometric conditions for multi-circle collisions to guarantee that the method provides an over-approximation, while practical case studies demonstrate that the results are not overly conservative. Furthermore, we present an algorithm to compute the POC for a Gaussian positional uncertainty and a wrapped Gaussian distribution for the heading angle. On the basis of a representative numerical case study, we discuss the trade-off between computational efficiency and over-approximation accuracy. Lastly, we present an SMPC scheme for motion planning leveraging the advantages of the proposed POC estimation method: computing smooth, reproducible trajectories while the controller retains its recursive feasibility in the tested scenarios and reacts conservatively when the uncertainty grows. \\
The remainder of this article is organized as follows. Section \ref{sec_problem_statment} provides preliminary notation and presents the problem statement of computing the POC using multi-circle shape approximations. Further, Section \ref{sec_collision_events} bounds the set of all collision configurations. Section \ref{sec_combination_of_proabilities} guarantees the over-approximation of the POC. Section \ref{sec_gaussian_POC_algo} presents a method to compute the POC for Gaussian uncertainty, and the SMPC motion planning scheme is presented in Section \ref{sec_SMPC}. The POC estimation for Gaussian uncertainties and the SMPC are then simulated in representative case studies in Section \ref{sec_simulation_example}, and the results are discussed in Section \ref{sec_discission}. Finally, conclusions and recommendations for future work are given in Section \ref{sec_conclusions_and_future_work}.

\section{Problem Statement: POC Estimation}\label{sec_problem_statment}
\begin{figure}[t!]
\begin{center}
\includegraphics[width=8.4cm]{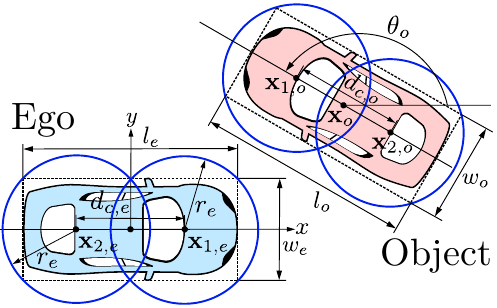}  
\caption{Schematic of the problem statement. }
\label{fig_problem_statement}
\end{center}
\end{figure}
Consider an arbitrary traffic scene consisting of an automated vehicle and another vehicle, referred to as ego and object vehicles. As the ego performs the POC estimation, all variables are defined with respect to the ego's geometric center. By assumption, the object  does not communicate any information to the ego (e.g., its position or intentions). We characterize each vehicle by a configuration composed of a position and orientation. Let any position in $\mathbb{R}^2$ be denoted by $\mathbf{x} := (x, y)$ and any orientation be denoted by $\theta \in \mathbb{R}_{\geq 0}$. Together, they construct the configuration space $\mathcal{C} := \mathbb{R}^2 \times \mathbb{R}_{\geq 0}$, giving the set of all possible vehicle configurations. For later calculations, we will restrict $\theta$ to positive values for convenience. Whenever confusion can arise, we identify variables associated with the ego and the object, respectively, with $e$ and $o$ subscripts. Let $\mathbf{y}_o := (\mathbf{x}_o, \theta_o) \in \mathcal{C}$ characterize the object's configuration. That is, $\mathbf{x}_o$ denotes the position of its geometrical center and $\theta_o$ denotes its heading angle measured with respect to the ego's $x$-axis (see Figure \ref{fig_problem_statement}).
A set of integers $\{a, a + 1, ..., b \}$, with $ a < b$, is denoted by $\mathbb{Z}_a^b$. We denote a configuration $\mathbf{y}$ at discrete-time instant $k$ as $\mathbf{y}_{k}$ and a predicted configuration at discrete-time instant $n \geq k$ given information available at time $k$ as $\mathbf{y}_{n|k}$. The same notation is applied for states, inputs, and constraints.  All random variables are defined over the same probability space $(\Omega, \mathcal{F}, \mathbb{P})$, where $\Omega$ is the sample space, $\mathcal{F}$ is a sigma-algebra over $\Omega$, and $\mathbb{P}$ is a probability measure over $\mathcal{F}$. The sets in $\mathcal{F}$, also known as events, are denoted by $\mathcal{A}_m$ with $m \in \mathbb{N}$. In the sequel, the following definitions will be needed. 

\textbf{\begin{definition}\label{def_prob_measure}(Probability Measure)
A mapping $\mathbb{P}: \mathcal{F} \rightarrow [0, 1]$ is a probability measure if:
    \begin{itemize}
        \item[(i)] $0 \leq \mathbb{P}\{\mathcal{A}_m \} \leq 1$, for any $\mathcal{A}_m \in \mathcal{F}$,
        \item[(ii)] $\mathbb{P}\{\Omega\} = 1$ and $\mathbb{P}\{\emptyset\} = 0$,
        \item[(iii)] $\mathbb{P}\{\bigcup_{m =1}^{\infty} \mathcal{A}_m\} = \sum_{m =1}^{\infty}\mathbb{P}\{ \mathcal{A}_m\}$, if $\mathcal{A}_m$ are pairwise disjoint (i.e., $\mathcal{A}_m \cap \mathcal{A}_j = \emptyset$, for $m\!\neq\!j$ and $\mathcal{A}_m,\mathcal{A}_j \in \mathcal{F}$).
    \end{itemize}
\end{definition}}
\begin{definition}\label{def_random_vector}(Random Element)
    Let $(\Omega, \mathcal{F})$ and $(\tilde{\Omega}, \tilde{\mathcal{F}})$ be measurable spaces. A map
    $\mathbf{x} : \Omega \rightarrow \tilde{\Omega}$ is called a random element if $\mathbf{x}^{-1}[\tilde{\mathcal{A}}_m] \in \mathcal{F}$ for all $\tilde{\mathcal{A}}_m \in \tilde{\mathcal{F}}$.
\end{definition}
\subsection{Collision Probability for Arbitrary Shapes} \label{sec_poc_arbritrary_shapes}
\begin{figure}[t!]
\begin{center}
\includegraphics[width=8.8cm]{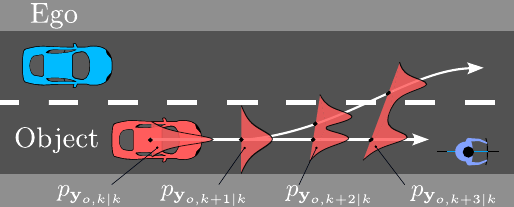}  
\caption{The ego estimates and predicts the uncertainty about the object's current and future configurations. Note, the PDFs $p_{\mathbf{y}_{o,n|k}}$ may have a different functional form for each time-step, thus accommodating different possible object behaviors.}
\label{fig_prediction}
\end{center}
\end{figure}
For motion planning purposes, the ego vehicle (referred to as ego from now on) must estimate the POC with the object vehicle (referred to as object from now on) at all times along its motion plan. Typically, the ego is equipped with a sensor suit that perceives the object, from which the information is processed by algorithms estimating its current configuration and predicting the future configurations of the object. Clearly, the last step, i.e., predicting future configurations of the object, introduces the greatest amount of uncertainty, especially given an already uncertain current configuration estimate. Hence, we abstract the process of measurement, estimation, and prediction by letting the object's configuration be uncertain at any point in time. That is, at a current time instance $k$, or predicted time instance $n \geq k$, a PDF can characterize the uncertainty in the configuration of the object. We note that this concept can be applied to many state-of-the-art road user prediction models, where the PDF can be any distribution over the configuration space $\mathcal{C}$, even accommodating very different object behaviors (see the example illustrated in Figure~\ref{fig_prediction}). Consequently, we adopt the following assumption about the object.
\begin{assumption}(Object information)\label{ass:object_info}
    The ego estimates the kinematic variables $\mathbf{y}_{o,n|k}$ at discrete-time instance $n \geq k$, given information available at time $k$, together with their associated probability density function $p_{\mathbf{y}_{o},n|k}$, for all instances $n$ within a prediction horizon $\mathbb{Z}_k^{k+N_P}$. All components in $p_{\mathbf{y}_{o},n|k}$ are mutually independent.
\end{assumption}
A collision occurs if the spaces occupied (i.e., the footprints) by the ego and the object intersect, regardless of how both got there. Thus, we note that the velocity is not considered to determine the POC, as the configuration is evidently a consequence of applying a velocity over time\footnote{Consider, e.g., that the object configurations are predicted by a dynamic model, propagating estimated object inputs (steering and acceleration) to object configurations. Here, the inputs are random variables, causing the predicted configurations also to be random variables. However, only the configurations are required to determine the occurrence of collisions, and, hence, the POC estimation \textit{method} is independent of the object's inputs. Nonetheless, the object's inputs are implicitly considered by the prediction model, propagating those into (future) configurations; thus, the estimation \textit{result} is dependent on the object's inputs.}. Given the fact that the method of POC estimation is the same for all time instants, we omit time indexing in the sequel for notational simplicity. Without loss of generality, we assume that both footprints are rectangles over $\mathbb{R}^2$ denoted, respectively, by $\mathcal{S}_e$ and $\mathcal{S}_o(\mathbf{y}_{o})$ (see dashed boxes in Figure \ref{fig_problem_statement}). As all measurements are with respect to the ego, the POC is a function of the statistics of $\mathbf{y}_{o}$ only. More specifically, if $\tilde{\mathcal{A}}_{rec} := \{\mathbf{y} \in \mathcal{C} \mid \mathcal{S}_e \cap\mathcal{S}_o(\mathbf{y}) \neq \emptyset\}$ denotes the object configurations that lead to a collision, then the POC is given by
\begin{equation}\label{eq_general_POC}
\textup{POC}\triangleq \mathbb{P}\{\mathbf{y}_{o} \in\tilde{\mathcal{A}}_{rec}\} = \int\displaylimits_{ \tilde{\mathcal{A}}_{rec}} p_{\mathbf{y}_o}(\mathbf{y}) \text{d} \mathbf{y}.
\end{equation}

\subsection{Multi-Circle Footprint Approximations}\label{sec_problem_multi-circle}
Although both footprints $\mathcal{S}_e$ and $\mathcal{S}_o(\mathbf{y}_o)$ are considered rectangular, we propose to use a multi-circle footprint cover. In the sequel, $\mathcal{B}[\mathbf{x}_c;r]$ denotes a closed circle with center $\mathbf{x}_c = (x_c, y_c)$ and radius $r \in \mathbb{R}_{>0}$. That is, 
\begin{equation}\label{eq_circle}
\mathcal{B}[\mathbf{x}_c;r] := \{ \mathbf{x} \in \mathbb{R}^2 \mid \| \mathbf{x}_c - \mathbf{x}\| \leq r \}.
\end{equation}
Suppose that the smallest rectangular approximation of a vehicle's footprint has length $l$ and width $w$, $l \geq w$. Further, suppose a vehicle's footprint is covered by $N_c$ overlapping, closed circles with equal radii $r$ and centers $\mathbf{x}_i(\mathbf{y})$, $i\in\mathbb{Z}_1^{N_c}$, where $\mathbf{y}$ denotes the vehicle configuration. That is, 
\begin{equation}\label{eq_shape}
\mathcal{S}(\mathbf{y}) \subset \mathcal{BC}(\mathbf{y}) := \bigcup_{i=1}^{N_c}\mathcal{B}[\mathbf{x}_i(\mathbf{y});r].
\end{equation}
It can be shown that the smallest radius $r$ needed to fully cover the rectangle, when all circles are placed equidistantly along the central longitudinal axis of the rectangle (with distance $d_c$), is given by
\begin{equation}\label{eq_placing}
    r\!=\! \sqrt{\left(\!\frac{l}{2N_c }\!\right)^2\! +\! \frac{w^2}{4}}, \; d_c\!:= \!2\sqrt{r^2\!-\!\frac{w^2}{4}}\!=\!\|\mathbf{x}_{i+1}(\mathbf{y})\!-\!\mathbf{x}_i(\mathbf{y})\|
\end{equation}
with $ i\in \mathbb{Z}_1^{N_c -1}$. 
The problem setup is depicted in Figure \ref{fig_problem_statement} for a two-circle ($N_c = 2$) vehicle footprint covering. \\
\begin{problem*}
Derive an estimate of the POC in (\ref{eq_general_POC}) by using the ego and object footprint covering (\ref{eq_circle}) - (\ref{eq_placing}).
\end{problem*}
The problem is approached by first identifying the intersection conditions (i.e., conditions when multi-circular coverings of two vehicles collide) for multi-circular coverings, yielding a set of all possible intersection configurations\footnote{As will be shown in Theorem \ref{theo_over_approx}, the collision configurations are a subset of the intersection configurations presented in Section \ref{sec_collision_events}, which is why our POC estimate is guaranteed to be an over-approximation of the actual POC.}, and subsequently, integrating the PDF $p_{\mathbf{y}_o}$ over this set to obtain the POC estimate.

\section{Intersection Conditions for Multi-Circle-to-Multi-Circle Collisions}\label{sec_collision_events}
\begin{figure*}[t!]
\begin{center}
\includegraphics{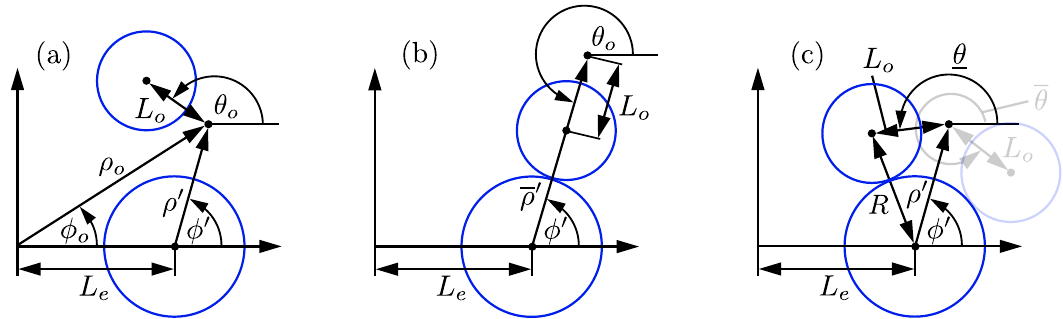}  
\caption{(a) Problem setup for some $(\phi_o, \rho_o, \theta_o)$ and shifting the polar frame by distance $L_e$ to $\rho', \phi'$. (b) Upper-bounding $\rho'$. (c) Bounding $\theta_o$ for a given $\rho'$ and $\phi'$.}
\label{fig_offset_circle}
\end{center}
\end{figure*}
To estimate the POC from the given PDF $p_{\mathbf{y}_o}$, one is tasked with defining a set $\tilde{\mathcal{A}} \subset \mathcal{C}$, for which conditions for a collision are satisfied, see (\ref{eq_general_POC}). Clearly, some object configurations can be excluded as they are too distant from the ego for a collision to occur, irrespective of the object's heading angle. In fact, let us define the relative distance of the object's geometric center to the ego's geometric center as $\rho_o = \sqrt{x_o^2 + y_o^2}$. Further, denote $R := r_e + r_o$ as the joint radius, with $r_e$ and $r_o$, respectively, the radii of the circles covering the ego and the object (given by (\ref{eq_placing})). Herewith, we can upper-bound the relative distance for which a collision can occur to a maximum collision distance as 
\begin{equation}\label{radial_bound}
    \overline{\rho} = R + \frac{d_{c,o}}{2}\big(N_{c,o} - 1\big) + \frac{d_{c,e}}{2}\big(N_{c,e} -1\big),
\end{equation}
where $N_{c,e}, N_{c,o}$ denote the number of circles covering the ego and object and $d_{c,e}, d_{c,o}$ are defined as in (\ref{eq_placing}) for the ego and object, respectively. Note that for any relative distance $\rho_o > \overline{\rho}$, a collision cannot occur. Using this inequality and the radial bounding (\ref{radial_bound}) intuitively motivates the usage of a polar coordinate frame, where we define a change of coordinates as 
\begin{equation}\label{eq:polar_transform}
\begin{split}
   & CT: \mathbb{R} \times [0, 2\pi) \rightarrow \mathbb{R}^2,\\
   & (\rho, \phi) \mapsto (x, y) = (\rho \cos{\phi}, \rho \sin{\phi}).
\end{split}
\end{equation} 
Using (\ref{eq:polar_transform}) and the two-argument arctangent function `$\text{atan2}$', the random vector 
\begin{equation}\label{eq:polar_y}
    \mathbf{y}_{o,p} := (\phi_o, \rho_o, \theta_o) = (\text{atan2}(y_o, x_o), \sqrt{x_o^2 + y_o^2},  \theta_o)
\end{equation}
is derived, where the subscript $p$ denotes that $\mathbf{y}_o$ is expressed in polar coordinates\footnote{Note that $\mathbf{y}_{o,p} = (\mathbf{x}_{o,p}, \theta_o)$; where $\mathbf{x}_{o,p} = CT^{-1}(\mathbf{x}_o)$, hence, $\mathbf{x}_{o,p} = (\phi_o, \theta_o)$ is no longer Gaussian, and $\phi_o$ and $\rho_o$ are in general not independent (although $\mathbf{x}_{o,p}$ and $\theta_o$ remain independent).}.
As such, we can restrict the position coordinates of object positions leading to a collision to $(\phi_o, \rho_o) \in [0, 2\pi) \times [0, \overline{\rho}]$, since outside this set, no collisions can occur, irrespective of the object's heading. Within that set, however, several conditions for collisions can be found. To examine those conditions, we first study the base case of two \textit{offset}-circles intersecting. The more general collision conditions are derived later based on this base case.
\subsection{Base Case: Intersection Conditions for Offset-Circles}
We denote circles as \textit{offset} when their centers are not placed in the geometric center of their associated vehicles but offset by a distance $L_e$ and $L_o$. The base case is depicted in Figure \ref{fig_offset_circle}(a). As the ego circle is offset by the distance $L_e$ along the ego's $x$-axis, we transform $(\phi_o, \rho_o)$ to the offset center. The transformed polar coordinates are obtained as 
\begin{equation}\label{eq:polar_shift}
   \phi' = \text{atan2}(y_o, (x_o - L_e)) \text{ and }  \rho' = \sqrt{(x_o -L_e)^2 + y_o^2}.
\end{equation}
Note that $x_o, y_o$ are related to $\phi_o$ and $\rho_o$ by (\ref{eq:polar_transform}). Evidently, for an intersection to occur in the shifted polar coordinate frame, a new radial upper-bound $\overline{\rho}' := R + L_o$ on $\rho'$ is found, see Figure \ref{fig_offset_circle}(b). 
Whenever the intersection depends on the heading angle $\theta_o$, the lower-bound $\underline{\theta}$ and upper-bound $\overline{\theta}$ characterize the first and last contact point between both circles (see Figure \ref{fig_offset_circle} (c)), which can be determined geometrically as
\begin{equation}\label{eq_circle_v_offset_circle_heading_bounds}
\begin{split}
    &\overline{\theta}(\phi', \rho') = \phi' + \pi + \arccos{\left(\frac{L_o^2 + \rho'^2 - R^2}{2L_o\rho'}\right)},\\
    &\underline{\theta}(\phi', \rho') = \phi' + \pi - \arccos{\left(\frac{L_o^2 + \rho'^2 - R^2}{2L_o\rho'}\right)}.
\end{split}
\end{equation}
The full derivation of (\ref{eq_circle_v_offset_circle_heading_bounds}) is provided in Appendix \ref{appendixA}.
The conditions for offset-circle intersections are summarized in Table \ref{tab_collision_conditions_circle_v_circle}. Here, the column \textit{Intersection angle interval} gives the range of heading angles that lead to an  intersection under the condition that the \textit{Radial condition} is satisfied. An empty interval (i.e., $\emptyset$) indicates that the circles cannot intersect for any angle $\theta_o \in [0, 2\pi)$. 
\begin{table}[htb]
\begin{center}
\caption{Intersection Conditions, Base case.}\label{tab_collision_conditions_circle_v_circle}
\begin{NiceTabular}{lll}
\toprule
\textbf{Geometry}&  \textbf{Radial Condition} & \textbf{Intersection Angle Interval}\\\midrule
\multirow{2}{*}{$ L_o > R$} &  $\rho' \geq L_o - R$& $[\underline{\theta}(\phi', \rho'), \overline{\theta}(\phi', \rho')]$ \\ \cmidrule(l){2-3}
 &  $\rho' < L_o -R $  &$\{ \emptyset \}$ \\
 \midrule
 \multirow{2}{*}{$ R \geq L_o$} &   $\rho' > R-L_o $&$[\underline{\theta} (\phi', \rho'), \overline{\theta} (\phi', \rho')]$ \\\cmidrule(l){2-3}
 &  $\rho' \leq R - L_o $ &$[0, 2\pi)$ \\
\bottomrule
\end{NiceTabular}
\end{center}
\end{table}
\begin{remark}
    Note that for a given $\rho\in [0, \overline{\rho}]$, the transformed radial distance $\rho'$ is derived from (\ref{eq:polar_shift}) with (\ref{eq:polar_transform}) and used to obtain the intersection angle intervals (\ref{eq_circle_v_offset_circle_heading_bounds}) under the conditions from Table \ref{tab_collision_conditions_circle_v_circle}. Still, the radial domain of possible collisions is $[0, \overline{\rho}]$, even though there can be a $\rho \in [0, \overline{\rho}]$, for which an intersection among two specific circles does not occur, as $\overline{\rho}$ is a bounding for a intersection between for any pair of ego and object circles. For such a $\rho$, the corresponding intersection angle interval will be $\{\emptyset\}$ (see Table \ref{tab_collision_conditions_circle_v_circle}, second row). For example, suppose the object's position is in the ego's circle center, i.e., $\rho' = 0$. Yet, if the object's circle is offset by a distance $L_o > R$, than the ego's circle cannot intersect with the offset object's circle, although the object's center is in the ego's circle center. Contrarily, if $R \geq L_o$, then for $\rho' = 0$, both circles will intersect, irrespective of the object's heading angle (see Table \ref{tab_collision_conditions_circle_v_circle}, fourth row). 
\end{remark}
\subsection{General Case: Intersection Conditions for Multi-Circles}
Suppose the ego's and object's footprints are covered by $N_{c,e}$ and $N_{c,o}$ circles, respectively. Then, each ego circle can intersect with one or more object circles for the same object configuration $\mathbf{y}_o$. Therefore, one cannot add the individual POCs for each offset-circle-to-offset-circle intersection, as the intersection angle intervals are not disjoint. We tackle this problem by taking the union of all intersection angle intervals and then integrating over the resultant set, which generally can be a collection of many disjoint intervals. Denote $\mathbb{I}^{\theta_o}_{l,q}(\phi, \rho)$ the intersection angle interval for $\theta_o$ of the $l$-th ego offset circle intersecting with the $q$-th object offset circle, retrieved from Table \ref{tab_collision_conditions_circle_v_circle} (third column). Then, for a specific $(\phi, \rho) \in [0, 2\pi) \times [0, \overline{\rho}]$, the region of integration of the PDF for the object's heading angle is given by 
\begin{equation}\label{eq:collision_set}
    \tilde{\mathcal{A}}_{\theta_o}(\phi, \rho) = \bigcup_{l = 1}^{N_{c,e}}\bigcup_{q = 1}^{N_{c,o}} \mathbb{I}^{\theta_o}_{l,q}(\phi, \rho).
\end{equation}
With the union of all intersection angle intervals ensuring that all intervals within $\tilde{\mathcal{A}}_{\theta_o}(\phi, \rho)$ are disjoint, we find the region of integration, i.e., the set of all object configurations leading to a collision with the ego vehicle as:
\begin{equation}\label{eq:Acir}
    \tilde{\mathcal{A}}_{cir}\!=\! \{(\phi, \rho, \theta) \in \mathcal{C}_p\! \mid \! \phi \in [0, 2\pi), \rho \in [0, \overline{\rho}], \theta \in \tilde{\mathcal{A}}_{\theta_o}(\phi, \rho) \},
\end{equation}
where $\mathcal{C}_p$ is the polar transformation of the configuration space, i.e., $\mathcal{C}_p := CT^{-1}(\mathbb{R}^2) \times \mathbb{R}_{\geq 0}$.
\begin{remark}\label{rem:intervals}
Note that the effort of computing all $N_{c,e} \times N_{c,o}$ intersection angle intervals in (\ref{eq:collision_set}) can be reduced, as symmetries from pairs of circles can be exploited. For example, suppose $[\underline{\theta}_1, \overline{\theta}_1]$ is the intersection angle interval for an object circle offset by $L_o$ along the object's longitudinal axis. Then, the intersection angle interval for the object circle offset by $-L_o$ along the object's longitudinal axis is given by $[\underline{\theta}_1 + \pi, \overline{\theta}_1+ \pi]$. Furthermore, if for a specific $(\phi, \rho)$ one intersection angle interval is full range, i.e., $[0, 2\pi)$, any other interval can be discarded, as the PDF of $\theta_o$ will then be integrated over its entire domain and, hence, the integral over $\theta$ (when the integral in (\ref{eq_general_POC}) is integrated by parts over each random variable) will evaluate to one, regardless of all other intervals.  
\end{remark}
\section{Probability of Multi-Circle-to-Multi-Circle Collision}\label{sec_combination_of_proabilities}
Note that (\ref{eq:collision_set}) may be a union of disjoint intervals. Therefore, we denote $\tilde{\mathbb{I}}_{i}^{\theta_o}(\phi, \rho)$ as one of the $N_I$ disjoint sub-intervals of $\tilde{\mathcal{A}}_{\theta_o}(\phi, \rho)$, such that $\bigcup_{i = 1}^{N_I}\tilde{\mathbb{I}}_{i}^{\theta_o}(\phi, \rho) = \tilde{\mathcal{A}}_{\theta_o}(\phi, \rho)$ and $\tilde{\mathbb{I}}^{\theta_o}_{i}(\phi, \rho) \cap \tilde{\mathbb{I}}^{\theta_o}_{j}(\phi, \rho) = \emptyset$ for all $i \neq j$ and $i, j \in \{1, 2, ..., N_I \}$. Given Assumption \ref{ass:object_info}, the polar coordinate transformation (\ref{eq:polar_transform}), (\ref{eq:polar_y}) and the set of collision configurations (\ref{eq:Acir}), an estimate of the POC, which is guaranteed to be an upperbound for the real POC, is given by 
\begin{equation}\label{eq:poc}
\begin{split}
    &\mathbb{P}\{\mathbf{y}_{o,p} \in \tilde{\mathcal{A}}_{cir}\} = \\
    &\int_0^{2\pi}\int_0^{\overline{\rho}}p_{\phi_o, \rho_o}(\phi, \rho)\left[ \sum_{i=1}^{N_{I}} \int_{\tilde{\mathbb{I}}^{\theta_o}_{i}(\phi, \rho)}p_{\theta_o}(\theta)\text{d}\theta \right]\text{d}\rho \text{d}\phi.
    \end{split}
\end{equation}
The theorem hereafter expresses that the result of (\ref{eq:poc}) indeed is an over-approximation of the true POC. 
\begin{theorem}\label{theo_over_approx}(Over-approximate POC)
    Consider an ego and an object vehicle, the footprints of which are both over-approximated by multiple circles positioned according to (\ref{eq_placing}). The POC estimate in (\ref{eq:poc}) is an over-approximate of the POC as defined in (\ref{eq_general_POC}), i.e., $\mathbb{P}\{ \mathbf{y}_o \in \tilde{\mathcal{A}}_{rec} \} \leq \mathbb{P}\{ \mathbf{y}_{o,p} \in \tilde{\mathcal{A}}_{cir} \}$.
\end{theorem}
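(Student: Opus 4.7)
The plan is to establish the inequality via monotonicity of the probability measure: if the set of object configurations causing a rectangle-to-rectangle collision is contained (after polar re-expression) in the set of configurations flagged by the multi-circle intersection conditions, then the probability assigned to the latter cannot be smaller than that assigned to the former. Concretely, I would prove $\tilde{\mathcal{A}}_{rec}$ (pushed through the polar change of coordinates) is a subset of $\tilde{\mathcal{A}}_{cir}$, and then invoke Definition \ref{def_prob_measure} to conclude.

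First I would establish the set containment. By (\ref{eq_shape}), $\mathcal{S}_e \subset \mathcal{BC}_e$ and $\mathcal{S}_o(\mathbf{y}) \subset \mathcal{BC}_o(\mathbf{y})$ for every configuration $\mathbf{y}$. Hence, whenever $\mathcal{S}_e \cap \mathcal{S}_o(\mathbf{y}) \neq \emptyset$, the corresponding multi-circle coverings also intersect, i.e., $\mathcal{BC}_e \cap \mathcal{BC}_o(\mathbf{y}) \neq \emptyset$. Expanding the unions in (\ref{eq_shape}), this is equivalent to the existence of indices $l \in \mathbb{Z}_1^{N_{c,e}}$ and $q \in \mathbb{Z}_1^{N_{c,o}}$ such that $\mathcal{B}[\mathbf{x}_{e,l}; r_e] \cap \mathcal{B}[\mathbf{x}_{o,q}(\mathbf{y}); r_o] \neq \emptyset$. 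The offset-circle analysis of Section \ref{sec_collision_events}, i.e., Table \ref{tab_collision_conditions_circle_v_circle} together with (\ref{eq_circle_v_offset_circle_heading_bounds}), shows that after the polar shift (\ref{eq:polar_shift}) aligned with the $l$-th ego circle, such a pairwise intersection holds exactly when $\theta_o \in \mathbb{I}^{\theta_o}_{l,q}(\phi, \rho)$, and (\ref{radial_bound}) enforces $\rho \leq \overline{\rho}$ whenever any ego--object circle pair intersects. Taking the union over $l$ and $q$ yields $\theta_o \in \tilde{\mathcal{A}}_{\theta_o}(\phi, \rho)$, which is precisely the membership condition defining $\tilde{\mathcal{A}}_{cir}$ in (\ref{eq:Acir}). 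Letting $\tilde{\mathcal{A}}_{rec,p}$ denote the image of $\tilde{\mathcal{A}}_{rec}$ under the bijective polar coordinate change (\ref{eq:polar_transform})--(\ref{eq:polar_y}) (away from the origin, which is Lebesgue-null), we therefore obtain $\tilde{\mathcal{A}}_{rec,p} \subset \tilde{\mathcal{A}}_{cir}$.

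Next I would match this set inclusion to the integral expression (\ref{eq:poc}). Because the polar transformation on $\mathbf{x}_o$ is a bijection (off a null set) and $\theta_o$ is unchanged, $\mathbb{P}\{\mathbf{y}_o \in \tilde{\mathcal{A}}_{rec}\} = \mathbb{P}\{\mathbf{y}_{o,p} \in \tilde{\mathcal{A}}_{rec,p}\}$; monotonicity of $\mathbb{P}$ (Definition \ref{def_prob_measure}(i), applied to $\tilde{\mathcal{A}}_{cir} = \tilde{\mathcal{A}}_{rec,p} \cup (\tilde{\mathcal{A}}_{cir}\setminus \tilde{\mathcal{A}}_{rec,p})$ via $\sigma$-additivity) then gives $\mathbb{P}\{\mathbf{y}_{o,p} \in \tilde{\mathcal{A}}_{rec,p}\} \leq \mathbb{P}\{\mathbf{y}_{o,p} \in \tilde{\mathcal{A}}_{cir}\}$. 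It remains to verify that the right-hand side of (\ref{eq:poc}) equals $\mathbb{P}\{\mathbf{y}_{o,p} \in \tilde{\mathcal{A}}_{cir}\}$: by Assumption \ref{ass:object_info}, independence of position and heading is preserved under the position-only polar change, so the joint density factors as $p_{\phi_o,\rho_o}(\phi,\rho)\,p_{\theta_o}(\theta)$; the inner sum over the disjoint sub-intervals $\{\tilde{\mathbb{I}}^{\theta_o}_i(\phi,\rho)\}_{i=1}^{N_I}$ equals $\int_{\tilde{\mathcal{A}}_{\theta_o}(\phi,\rho)} p_{\theta_o}(\theta)\,\text{d}\theta$ by Definition \ref{def_prob_measure}(iii), reconstructing (\ref{eq:poc}) exactly.

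The main obstacle I anticipate is the circle-to-circle geometric step: showing that the tabulated intersection angle intervals of Table \ref{tab_collision_conditions_circle_v_circle} are complete, in the sense that they capture every heading $\theta_o$ producing a pairwise offset-circle overlap. This requires verifying that (\ref{eq_circle_v_offset_circle_heading_bounds}) correctly parametrizes the first and last contact angles, and that the cases $L_o > R$ and $R \geq L_o$ together with the radial conditions $\rho' \gtrless L_o - R$ and $\rho' \gtrless R - L_o$ partition all geometric possibilities without gaps. The derivation in Appendix \ref{appendixA} supplies this, but it is the step that must be handled carefully because any missed configuration would break the containment $\tilde{\mathcal{A}}_{rec,p} \subset \tilde{\mathcal{A}}_{cir}$ and therefore the over-approximation guarantee.
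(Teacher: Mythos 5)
Your proposal is correct and follows essentially the same route as the paper's proof: establish that the polar re-expression of $\tilde{\mathcal{A}}_{rec}$ is contained in $\tilde{\mathcal{A}}_{cir}$ via the shape covering (\ref{eq_shape}) and the invertibility/measurability of the coordinate change, then conclude by monotonicity of the probability measure. Your additional care about the completeness of the intersection angle intervals in Table \ref{tab_collision_conditions_circle_v_circle} and about matching (\ref{eq:poc}) to $\mathbb{P}\{\mathbf{y}_{o,p}\in\tilde{\mathcal{A}}_{cir}\}$ makes explicit two steps the paper compresses into ``by construction'' and a separate remark, but it is the same argument.
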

\begin{proof}
    By construction and (\ref{eq_shape}), $\tilde{\mathcal{A}}_{cir} = \{ \mathbf{y}_{o,p} \in \mathcal{C}_p \mid \mathcal{BC}_e(\mathbf{0}) \cap \mathcal{BC}_o(CT[\mathbf{y}_{o,p}]) \neq \emptyset\}$, where, with a slight abuse of notation, we define: $CT[\mathbf{y}_{o,p}] = (CT(\mathbf{x}_{o,p}), \theta_o)$, with $\mathbf{y}_{o,p} = (\mathbf{x}_{o,p}, \theta_o)$. \\
    It follows from this fact, the definition of $\tilde{\mathcal{A}}_{rec}$, (\ref{eq_shape}), and the invertibility of (\ref{eq:polar_transform}), that if $\mathbf{y}_o \in \tilde{\mathcal{A}}_{rec}$, then $\mathbf{y}_{o,p} = CT^{-1}[\mathbf{y}_o] = (CT^{-1}(\mathbf{x}_o), \theta_o) \in \tilde{\mathcal{A}}_{cir}$. Let $CT^{-1}[\tilde{\mathcal{A}}_{rec}] := \{ \mathbf{y}_{o,p} \in \mathcal{C}_p \mid \mathbf{y} = CT[\mathbf{y}_{o,p}] \in \tilde{\mathcal{A}}_{rec}\}$. Clearly, $CT^{-1}[\tilde{\mathcal{A}}_{rec}] \subset \tilde{\mathcal{A}}_{cir}$. Further, since $CT$ is a continuous (and, hence, measurable) map, then $CT^{-1}[\tilde{\mathcal{A}}_{rec}]$ is a measurable set. From this fact, it follows that $\mathbb{P}\{ \mathbf{y}_{o,p} \in \tilde{\mathcal{A}}_{cir} \} \geq \mathbb{P}\{ \mathbf{y}_{o,p} \in CT^{-1}[\tilde{\mathcal{A}}_{rec}] \} = \mathbb{P}\{\mathbf{y}_o \in \tilde{\mathcal{A}}_{rec} \}$. This completes the proof.
\end{proof}
\begin{remark}
Note that to obtain (\ref{eq:poc}), the assumption of independence of the random variables $x, y, \theta$ in Assumption \ref{ass:object_info} has been exploited, see, e.g., \cite{billingsley1995}, p. 262 - 264, for a comprehensive definition of independence. That is, the independence assumption allows to separate the PDF as $p_{\mathbf{y}_o}(x, y, \theta) = p_{x_o}(x)p_{y_o}(y)p_{\theta_o}(\theta)$. Let $h(x, y) = \sqrt{x^2 + y^2}$ and $g(x,y) = \text{atan2}(y, x)$, then $(\phi, \rho) = (h(x, y), g(x,y))$ is a measurable random vector as $h, g$ are measurable; however, $\phi, \rho$ are no longer independent of each other. Evidently, we can rearrange the PDFs as $p_{\mathbf{y}_o}(x, y, \theta) = p_{\phi_o, \rho_o}(\phi, \rho)p_{\theta_o}(\theta)$, which is the expression found in (\ref{eq:poc}).
\end{remark}
\begin{remark} 
While (\ref{eq:poc}) holds for many types of PDFs, it may also hold for other shapes, e.g., rectangles. However, with (\ref{eq:poc}) being shape-independent, the intersection conditions, expressed throughout Section \ref{sec_collision_events}, are not shape-independent. Here, circles are a particular effective shape approximation, as the intersection angle intervals can be straightforwardly computed from the analytic expression (\ref{eq_circle_v_offset_circle_heading_bounds}), which may not be possible for other shapes.
\end{remark}
\section{POC Algorithm for Gaussian Uncertainties}\label{sec_gaussian_POC_algo}
Theorem~\ref{theo_over_approx} in the previous section guarantees the over-approxinmation of the POC and it is fully generic in the sense that it can be used for a wide class of PDFs. In this section, we present an exemplary derivation of the POC over-approximation for Gaussian uncertainties given their prevalence in motion planning applications (see, e.g.,~\cite{dixit2019trajectory, volz2015stochastic, patil2012estimating, ploeg2022long, altendorfer2021new, du2011probabilistic}) and accompany that with an estimation algorithm. Note that while Gaussians cannot express different object behaviors, e.g., the object staying in lane or lane changing, as in Figure~\ref{fig_prediction}, different behaviors can still be accounted for by multiplying the Gaussian POC estimate with the likelihood of a specific behavior.\\
\subsection{POC Estimation for Gaussian Uncertainties}
Although the object's heading angle $\theta_o$ is defined on $\mathbb{R}_{\geq 0}$, to the ego it appears periodic on the interval $[0, 2\pi)$. Namely, from the ego's perspective, there is no difference if $2\pi$ is added to the value of $\theta_o$. The ego's perspective, however, inherently renders $\theta_o$ non-Gaussian as $\theta_o$ is bounded to $[0, 2\pi)$. The motivation for modeling  $\theta_o$ to be defined on $\mathbb{R}_{\geq 0}$ is due to convenience, as no further checks are needed when summing angles as in  (\ref{eq_circle_v_offset_circle_heading_bounds}), or exploiting symmetries as outlined in Remark \ref{rem:intervals}. To account for the periodicity from the ego's perspective, we use a wrapped normal distribution to assign a Gaussian-like PDF to the heading angle. Further, with $\mu_{\theta_o}, \sigma_{\theta_o}^2$ being the mean and variance of $\theta_o$, we let the wrapped-Gaussian distribution for the object's heading angle be given by 
\begin{equation}\label{eq_inf_wrapped_gauss}
    p_{\theta_o}(\theta) = \frac{1}{\sqrt{2\pi} \sigma_{\theta_o}}\sum_{\beta = -\infty}^{\infty}\exp{\left[ -\frac{(\theta + 2\pi \beta -\mu_{\theta_o})^2}{2\sigma_{\theta_o}^2} \right]}.
\end{equation}
Given the polar coordinate transformation (\ref{eq:polar_transform}), we can obtain the bivariate-Gaussian for the object's position $\mathbf{x}_o$ as a function of the polar coordinates $\phi$ and $\rho$ as follows:
\begin{equation}\label{eq_polar}
\begin{split}
    & p_{\phi_o, \rho_o}(\phi, \rho) = \frac{\rho}{2 \pi \sigma_{x_o}\sigma_{y_o}} \text{exp} \Bigg[ \frac{(\rho \cos{(\phi)} - \mu_{x_o})^2}{-2\sigma_{x_o}^2} \\
    & -\frac{(\rho \sin{(\phi)} -  \mu_{y_o})^2}{2\sigma_{y_o}^2}  \Bigg],
\end{split}
\end{equation} 
where $\mu_{x_o}, \mu_{y_o}, \sigma^2_{x_o}, \sigma^2_{y_o}$ are the means and variances of $x_o, y_o$. \\
As shown in \cite{kurz_wrapped_gauss}, the infinite sum in (\ref{eq_inf_wrapped_gauss}) can be tightly approximated by truncating it within $\beta \in \mathbb{Z}_{-N_\beta}^{N_\beta}$, when $N_\beta \geq 3$, leaving $2N_\beta + 1$ summands\footnote{In fact, the accuracy of this approximation also depends on $\sigma_{\theta_o}$ and the error $e_\beta$ (i.e., the absolute difference from the true value) grows when $\sigma_{\theta_o}$ increases. For example, let $N_\beta = 3$ and $\sigma_{\theta_o} = 1$, then $e_\beta < 10^{-15}$, if $\sigma_{\theta_o}$ = 5, then $e_\beta < 10^{-6}$ \cite{kurz_wrapped_gauss}. Generally, for realistic values of $\sigma_{\theta_o}$, i.e., $\sigma_{\theta_o} \leq \pi$, the error is insignificant, especially when comparing it to the precision of numerical integration, which we will later set to three digits.}. While (\ref{eq_polar}) requires numerical integration (see \cite{tolksdorf2024}), we can analytically integrate (\ref{eq_inf_wrapped_gauss}) over $\tilde{\mathcal{A}}_{\theta_o}(\phi, \rho)$ in (\ref{eq:collision_set}), given the approximation $\beta \in \mathbb{Z}_{-N_\beta}^{N_\beta}$. Hence, with (\ref{eq:poc}), we obtain the POC estimate as 
\begin{equation}\label{eq_POC_Gaussian}
\begin{split}
    &\mathbb{P}\{\mathbf{y}_{o,p} \in \tilde{\mathcal{A}}_{cir}\} = \frac{1}{2}\int_0^{2\pi} \int_{0}^{\overline{\rho}}p_{\phi_o, \rho_o}(\phi, \rho) \sum_{i = 1}^{N_I} \sum_{\beta=-N_\beta}^{N_\beta}\Bigg[\\
    &\text{erf}\left( \frac{\overline{\theta}_i(\phi, \rho)\!-\!\mu_{\theta_o}\!+\!2\pi \beta}{\sigma_{\theta_o} \sqrt{2}}\right)\!- \! \text{erf}\left( \frac{\underline{\theta}_i(\phi, \rho)\!-\!\mu_{\theta_o}\!+\!2\pi \beta}{\sigma_{\theta_o} \sqrt{2}}\right)\Bigg]\\
    &\text{d}\rho\text{d}\phi,
    \end{split}
\end{equation}
 where $\text{erf}(z)$ denotes the error function of $z$. Also recall that $\overline{\theta}_i(\phi, \rho)$ and $\underline{\theta}_i(\phi, \rho)$ are the boundaries of the intervals of $\tilde{\mathbb{I}}^{\theta_o}_{i}(\phi, \rho)$ in (\ref{eq:poc}). \\
 \subsection{POC Estimation Algorithm}
 The computational efficiency of the algorithm depends on 1) the computation of the intersection angle intervals $\mathbb{I}^{\theta_o}_{l,q}(\phi, \rho)$ and 2) sorting those into pairwise disjoint intervals $\tilde{\mathbb{I}}^{\theta_o}_{i}(\phi, \rho)$. Here, Remark \ref{rem:intervals}, outlines an approach to an efficient implementation. Crucially, due to the analytical solution of the PDF in the object's heading angle, the computation of the POC only requires one numerical integration of the remaining two-dimensional integral with respect to the variables $\rho$ and $\phi$ in (\ref{eq_POC_Gaussian}). Here, we find that if the sampling ranges (i.e., discretization intervals of a Riemannian approximation of the integral) of $\phi, \rho$ for numerical integration are kept constant, which only need to change if the numerical integration tolerance is altered, all intersection angle intervals $\tilde{\mathbb{I}}^{\theta_o}_{i}(\phi, \rho)$ can be pre-computed for the desired numerical integration accuracy. Namely, those intervals do \textit{not} dependent on the object's configuration. \\
 Algorithm \ref{alg:poc_overview} overviews our proposed POC estimate computation algorithm for numerical integration with the trapezoidal rule. Our proposed algorithm leverages vectorization, i.e., performing many element-wise operations in one line of programming code. If a variable in Algorithm \ref{alg:poc_overview} - \ref{alg:SortInts} is vectorized, and, hence, all operations in that line are vectorized, then the variable is identified by an arrow, e.g., $\vec{\phi}$. Element-wise logic for vectorized variables is denoted by a logic operation in square brackets, e.g., $\vec{\underline{\theta}}[\vec{\rho}' \leq R - L_o] \gets 0 $ denotes that for all elements in $\vec{\underline{\theta}}$, where, for the corresponding element in $\vec{\rho}'$ (note that $\vec{\rho}'$  and $\vec{\underline{\theta}}$ are of the same length), $\vec{\rho}' \leq R - L_o$ is true, $0$ is assigned. To perform the two-dimensional integral over $\phi, \rho$ in (\ref{eq_POC_Gaussian}) with the trapezoidal rule, a grid over $[0, 2\pi] \times [0, \overline{\rho}]$ must be sampled. Therefore, in line 2 and 3 of Algorithm \ref{alg:poc_overview}, we sample $[0, \Delta \phi, 2\Delta\phi, ..., (N_s-1)\Delta\phi, 2\pi] \times [0, \Delta \rho, 2\Delta\rho, ..., (N_s-1)\Delta\rho, \overline{\rho}]$  so that $\vec{\phi}$ and $\vec{\rho}$ are each of length $N_s \times N_s$  and row-by-row of $(\vec{\phi}, \vec{\rho})$ unique pairs are given. Line 4 of Algorithm \ref{alg:poc_overview} calls another function, given in Algorithm \ref{alg:CollAngleInts}, which calculates the intersection angle intervals $\mathbb{I}^{\theta_o}_{l,q}(\phi, \rho)$ and stores them in a matrix $M_I$. Here, for each circle-to-circle pair, the intersection angle interval is computed. The inner loop in lines 13-37 computes the intervals for pairs of object circles, as outlined by Remark \ref{rem:intervals}. Note that the function $ \text{GetDistance}(N_{c,e}, d_{c,e}, l)$ computes the offsets $L_e$ and $L_o$ for the circle of consideration. After obtaining the matrix of all intersection angle intervals $M_I$, Algorithm \ref{alg:SortInts} ensures that for each sampled pair of $\phi, \rho$, only disjoint intersection angle intervals remain, see (\ref{eq:collision_set}). Whenever two intervals overlap, the union of both is taken. After taking the union, it is also ensured that the maximum length of the remaining interval does not exceed $2\pi$ (see line 8 in Algorithm \ref{alg:SortInts}). Note that lines 6 - 8 in Algorithm~\ref{alg:SortInts} must be performed in modulo $2\pi$ arithmetic, which requires checking whether one or both intersection angle intervals wrap around.\\
The main advantage of the proposed POC algorithm is that lines 1-5 in Algorithm \ref{alg:poc_overview}, which carry most of the computational burden, can be calculated at initialization, as the object configuration and it's uncertainty is not needed. Lines 6-34 estimate the two-dimensional integral (\ref{eq_POC_Gaussian}), for a given object mean $\mu_o = (\mu_{x_o}, \mu_{y_o}, \mu_{\theta_o})$ and variance $\Sigma_o = (\sigma_{x_o}^2,\sigma_{y_o}^2, \sigma_{\theta_o}^2)$. Note that the summations in  (\ref{eq_POC_Gaussian}), i.e., the analytic solution of the PDF in the object's heading angle, are calculated in lines 10-17 with vectorization, while the remaining numerical integration (lines 19-32) are non-vectorized. Therefore, the subscript $m +N_s(j-1)$ in line 23 denotes the $m +N_s(j-1)$-th element in $\vec{P}_\theta$.

\begin{algorithm}
\caption{Multi-circle Probability of Collision}\label{alg:poc_overview}
\begin{algorithmic}[1]
\Require $N_{c,e}, N_{c,o}, l_e, w_e, l_o, w_o$ \Comment{geometric inputs}
\State $r_e, r_o, d_{c,e}, d_{c,o} \gets N_{c,e}, N_{c,o}, l_e, w_e, l_o, w_o$  \Comment{see (\ref{eq_placing})}
\State $\vec{\phi} \gets N_{s}$ \Comment{sample $N_{s} \times N_{s}$ values of $\phi$}
\State $\vec{\rho} \gets N_{s}, \overline{\rho}$ \Comment{sample $N_{s} \times N_s$ values of $\rho$}
\State $M_I \gets \text{InterAngleInts}(N_{c,e}, N_{c,o}, r_e, r_o, d_{c,e}, d_{c,o},\vec{\phi}, \vec{\rho})$
\State $\tilde{M}_I \gets \text{SortInts}(M_I, N_{io}, N_{c,e})$
\Require $\mathbf{\mu_o}, \Sigma_o$ \Comment{input object mean and variance}
\State $\mu_{\rho_o} \gets \sqrt{\mu_{x_o}^2 + \mu_{y_o}^2}$
\State $\mu_{\phi_o} \gets \text{mod}(\text{atan2}(\mu_{y_o}, \mu_{x_o}), 2\pi)$
\State $\mu_{\theta_o} \gets \text{mod}(\mu_{\theta_o}, 2\pi)$
\State $\vec{P}_{\theta_o} \gets \vec{0}$ 
	\For{$i \gets 0$ to $N_I$}\Comment{do summations in (\ref{eq_POC_Gaussian})}
	\For{$\beta \gets -N_\beta$ to $N_\beta$}
		\State $\text{extract } \vec{\tilde{\mathbb{I}}}_{i} \text{ from } \tilde{M}_I$
		\State $\vec{P}_1 \gets \text{erf}\left( \frac{\vec{\overline{\theta}}_i-\mu_{\theta_o}+2\pi \beta}{\sigma_{\theta_o} \sqrt{2}}\right)$
		\State $\vec{P}_2 \gets \text{erf}\left( \frac{\vec{\underline{\theta}}_i-\mu_{\theta_o}+2\pi \beta}{\sigma_{\theta_o} \sqrt{2}}\right)$
		\State $\vec{P}_{\theta_o} \gets \vec{P}_1 - \vec{P}_2 + \vec{P}_{\theta_o}$
	\EndFor
	\EndFor
\State $P_{\phi_o, \rho_o, \theta_o} \gets 0$
\For{$j \gets 1$ to $ N_s$}
\State $\phi_j \in [0, \Delta \phi, 2\Delta\phi, ..., (N_s-1)\Delta\phi, 2\pi]$ 
\For{$m \gets 1$ to $N_s$}
\State $\rho_m \in [0, \Delta \rho, 2\Delta\rho, ..., (N_s-1)\Delta\rho, \overline{\rho}]$
	
	\State $P_3 \gets  \frac{1}{2}p_{\phi_o, \rho_o}(\phi_j, \rho_m) P_{\theta, m + N_s(j-1)}$ \Comment{see (\ref{eq_POC_Gaussian})}
	\If{$j == 1$ or $j == N_s$}
		\State $P_3 \gets \frac{1}{2}P_3$ \Comment{corners of integration grid}
	\EndIf
	\If{$m == 1$ or $m == N_s$}
		\State $P_3 \gets \frac{1}{2}P_3$ \Comment{corners of integration grid}
	\EndIf
	\State $P_{\phi_o, \rho_o} \gets P_3 +  P_{\phi_o, \rho_o, \theta_o}$
\EndFor
\EndFor
\State $\mathbb{P}\{\mathbf{y}_{o,p} \in \tilde{\mathcal{A}}_{cir}\} \gets \Delta \phi \Delta \rho P_{\phi_o, \rho_o, \theta_o}$ \Comment{weigh by sizes}\\
\Return $\mathbb{P}\{\mathbf{y}_{o,p} \in \tilde{\mathcal{A}}_{cir}\}$
\end{algorithmic}
\end{algorithm}

\begin{algorithm}
\caption{Intersection Angle Intervals `InterAngleInts'}\label{alg:CollAngleInts}
\begin{algorithmic}[1]
\Require $N_{c,e}, N_{c,o}, r_e, r_o, d_{c,e}, d_{c,o}, \vec{\phi}, \vec{\rho}$
\If{$N_{c,o}$ is even}
	\State $N_{io} \gets N_{c,o}/2$
\Else
	\State $N_{io} \gets (N_{c,o}+1)/2$
\EndIf
\State $\vec{x} \gets \vec{\rho} \cos{(\vec{\phi})}$
\State $\vec{y} \gets \vec{\rho} \sin{(\vec{\phi})}$
\State $\text{initialize matrix } M_I \text{ of size } N_s \times 4N_{io}N_{c,e}$
\For{$l \gets 1$ to $N_{c,e}$}
	\State $L_e\gets \text{GetDistance}(N_{c,e}, d_{c,e}, l)$
	\State $\vec{\rho}' \gets \sqrt{(\vec{x} -  L_e)^2 + \vec{y}^2}$ \Comment{see (\ref{eq:polar_shift})}
	\State $\vec{\phi}' \gets \text{mod}(\text{atan2}(\vec{y}, \vec{x}) , 2\pi)$ \Comment{see (\ref{eq:polar_shift})}
	\For{$q \gets 1$ to $N_{io}$}
		\State $ L_o \gets \text{GetDistance}( N_{c,o}, d_{c,o}, q)$
		\If{$L_o == 0$} \Comment{check if circle is not offset}
			\State $[\vec{\underline{\theta}}, \vec{\overline{\theta}}] \gets [\vec{0}, \vec{2\pi}]$
			\State $\vec{\overline{\theta}}[\vec{\rho}' > R] \gets 0$
			\State $M_I \gets \vec{\mathbb{I}}^{\theta_o}_{l,q} := [\vec{\underline{\theta}}, \vec{\overline{\theta}}]$
		\Else
			\State $\vec{\theta}_{coll} \gets \arccos{\left(\frac{L_o^2 + \vec{\rho}'^2 - R^2}{2L_o\vec{\rho}'}\right)}$ \Comment{see (\ref{eq_circle_v_offset_circle_heading_bounds})}
			\State $\vec{\theta}_{coll} [\vec{\rho}' \geq R + L_o] \gets 0 $ \Comment{see Table \ref{tab_collision_conditions_circle_v_circle}}
			\State $\vec{\overline{\theta}} \gets \vec{\phi}' + \pi + \vec{\theta}_{coll}$ \Comment{see (\ref{eq_circle_v_offset_circle_heading_bounds})}
			\State $\vec{\underline{\theta}} \gets \vec{\phi}' + \pi - \vec{\theta}_{coll}$ \Comment{see (\ref{eq_circle_v_offset_circle_heading_bounds})}

			\If{$R \geq L_o$} \Comment{see Table \ref{tab_collision_conditions_circle_v_circle}}
				\State $\vec{\overline{\theta}}[\vec{\rho}' \leq R - L_o] \gets 2\pi $
				\State $\vec{\underline{\theta}}[\vec{\rho}' \leq R - L_o] \gets 0 $
			\Else \Comment{see Table \ref{tab_collision_conditions_circle_v_circle}}
				\State $\vec{\overline{\theta}}[\vec{\rho}' \leq L_o - R] \gets 0 $
				\State $\vec{\underline{\theta}}[\vec{\rho}' \leq L_o - R] \gets 0 $		
			\EndIf	

			\State $\vec{\overline{\theta}}[\vec{\overline{\theta}} - \underline{\vec{\theta}} > \pi] \gets \pi $
			\State $\vec{\underline{\theta}}[\vec{\overline{\theta}} - \underline{\vec{\theta}} > \pi] \gets 0 $
			\State $\vec{\mathbb{I}}^{\theta_o}_{l,q} \gets [\vec{\underline{\theta}}, \vec{\overline{\theta}}]$
			\State $\vec{\mathbb{I}}^{\theta_o}_{l,N_{c,o} - q} \gets [\vec{\underline{\theta}} + \pi, \vec{\overline{\theta}} + \pi]$ \Comment{see Remark \ref{rem:intervals}}
			\State $M_I \gets \vec{\mathbb{I}}^{\theta_o}_{l,q}, \vec{\mathbb{I}}^{\theta_o}_{l,N_{c,o} - q}$
			\EndIf	
	\EndFor
\EndFor\\
\Return $M_I, N_{io}$
\end{algorithmic}
\end{algorithm}

\begin{algorithm}
\caption{Sort Intervals `SortInts'}\label{alg:SortInts}
\begin{algorithmic}[1]
\Require $M_I, N_{io}, N_{c,e}$
\State $N_I \gets 2N_{io}N_{c,e}$ \Comment{number of interval vectors}
\For{$i \gets 1$ to $N_I$}
	\For{$j \gets 1$ to $N_I$}
\If{$i \neq j$}
	\State $\text{extract intervals } \vec{\mathbb{I}}^{\theta_o}_{i} \text{ and } \vec{\mathbb{I}}^{\theta_o}_{j} \text{ from } M_I$
	\State $\vec{\mathbb{I}}^{\theta_o}_{i}[\vec{\mathbb{I}}^{\theta_o}_{i} \cap \vec{\mathbb{I}}^{\theta_o}_{j} \neq \emptyset] \gets \mathbb{I}^{\theta_o}_{i} \cup \mathbb{I}_{j}$\Comment{check overlap}
	\State $\vec{\mathbb{I}}^{\theta_o}_{j}[\vec{\mathbb{I}}^{\theta_o}_{i} \cap \vec{\mathbb{I}}^{\theta_o}_{j} \neq \emptyset] \gets [0,0]$
	\State $\vec{\mathbb{I}}^{\theta_o}_{i}[\vec{\overline{\theta}}_i - \vec{\underline{\theta}}_i > 2\pi] \gets [0, 2\pi]$ \Comment{check length}
	\State $\text{sort }\vec{\mathbb{I}}^{\theta_o}_{i} \text{ in the } i\text{-th column of } M_I$
	\State $\text{sort }\vec{\mathbb{I}}^{\theta_o}_{j} \text{ in the } j\text{-th column of } M_I$
\EndIf
\EndFor
\EndFor\\
\Return $\tilde{M}_I \gets M_I$
\end{algorithmic}
\end{algorithm}

\section{Motion Planning by Stochastic Model Predictive Control}\label{sec_SMPC}
 One of the motivations for the analytical derivation of an over-approximate POC (\ref{eq_POC_Gaussian}), is the (real-time) application in optimization-based motion planning algorithms. Consider the path-following SMPC problem described hereafter.
 \subsection{Path-Following Problem}
 A reference path $\mathcal{P}$, i.e., a regular curve in the configuration space $\mathcal{C}$, and a constant reference velocity $v_{ref} \in \mathbb{R}_{\geq0}$, are provided to the ego. The ego plans its own motion online, i.e., during run-time, by minimizing the error with respect to both references within some finite horizon $\mathbb{Z}_{k}^{k +N_P}$, with $N_P \in \mathbb{N}_{>0}$. We assume that the references can be realized exactly by the vehicle, i.e., the path and velocity references satisfy the physical limitations of the ego vehicle dynamics. \\
Path-following problems treat the time evolution along the reference path as an additional degree of freedom,enabling the controller to plan maneuvers that may deviate from time-independent references to satisfy constraints, see, e.g., \cite{Faulwasser.2016, tolksdorf2023risk}. Essentially, the planner can freely maneuver by accepting costs from reference errors to satisfy constraints, such as the POC constraint in our setting. Let the motion dynamics of the ego  be modeled by a discrete-time non-linear system of the form (e.g., obtained by time discretization of a continuous-time vehicle dynamics model):
\begin{equation}\label{eq:sys}
\begin{split}
\mathbf{z}_{k+1} &= f(\mathbf{z}_k, \mathbf{u}_{k}), \\
(\mathbf{y}^T_{e, k}, v_{e,k}) & = h(\mathbf{z}_{k}),
\end{split}
\end{equation}
where $\mathbf{z}_k \in \mathbb{R}^{n_z}$ represents the ego's state vector and $\mathbf{u}_{k} \in \mathbb{R}^{n_{u}}$ is the input vector at time $k$. Further, $\mathbf{y}^T_{e, k}, v_{e,k}$ are the ego's configuration and longitudinal velocity at time $k$, respectively. The model equations are given by $f$, and $h$ is the output function. State and input constraints are given by bounding $\mathbf{z}_k$ and $\mathbf{u}_{k}$ in sets, i.e., $\mathbf{z}_k \in \mathcal{Z}$ and $\mathbf{u}_{k} \in \mathcal{U}$, respectively, which must be satisfied at all times. The path is a regular curve parameterized by a scalar variable $\lambda \in [\lambda_0, \lambda_g]$ (see \cite{mcnaughton2011motion} for ways to compute a path from a function giving the curvature.). At each planning cycle, the initial reference point on the path is determined by finding the closest point to the ego, i.e.,
\begin{equation}
\lambda_{k|k} = \argmin_{\lambda \in [\lambda_0, \lambda_g]} \| \mathbf{y}_{e, k|k} - \mathbf{y}_P(\lambda) \|, 
\end{equation}
where $\mathbf{y}_P(\lambda) \in \mathcal{P}$. Within the planning horizon, we use the ego's velocity heading to approximate the progress along the path with 
\begin{equation}\label{eq:input}
\lambda_{n+1|k} = \lambda_{n|k} + v_{e,n|k}\cos{(\theta_{e,k|n} - \theta_{p,n|k})},
\end{equation}
where $\theta_p$ is the reference heading angle, i.e., the angle of the tangent to the path at $\lambda_{k|n}$ (see Figure 2 in~\cite{tolksdorf2023risk}). Note that the accuracy of the approximation (\ref{eq:input}) diminishes as the path's curvature increases. As such, we define the path-following error as 
\begin{equation}\label{eq:error}
\begin{split}
\mathbf{e}(\lambda_{n|k}) & =
\begin{pmatrix}
\mathbf{y}^T_{e, n|k} - \mathbf{y}_P^T(\lambda_{n|k})\\
v_{e,n|k} - v_{ref}
\end{pmatrix}.
\end{split}
\end{equation}
Accordingly, when $\mathbf{e}_k=\mathbf{0}$, the ego follows the reference path with the reference velocity, because the controller's chosen velocity $v_{e,n|k}$ is mapped onto the progress of the path with (\ref{eq:input}). As a cost function $J$, we quadratically penalized deviations of the reference path and velocity multiplied by a positive definite weighting matrix $W \in \mathbb{R}^ {4\times4}$ as 
\begin{equation}
    J(\lambda_{n|k}) =\mathbf{e}^T(\lambda_{n|k}) W\mathbf{e}(\lambda_{n|k}).
\end{equation}
With the cost function $J$ and input sequence $\mathbf{U}_k = [\mathbf{u}_{k|k}, \mathbf{u}_{k|k+1}, \dots, \mathbf{u}_{k|k+N_P}]$ over the planning horizon $\mathbb{Z}_{k}^{k +N_P}$, we define the path-following SMPC problem as
\begin{subequations}
\begin{equation}
     \begin{split}
&V^{SMPC}(\mathbf{z}_{k}, \lambda_k) := \min_{\mathbf{U}_k} \sum_{n=k}^{k + N_p} J(\lambda_{n|k})\label{eq:ra}, \\
\end{split}
\end{equation}
 subject to: 
\begin{align}
 & \;\;\;\; \mathbf{z}_{k|k} = \mathbf{z}_k, \;\;\;\; \lambda_{k|k} = \lambda_k,\label{rb} \\
& \forall n \in \mathbb{Z}_k^{k+N_P}: \mathbf{z}_{n+1|k} = f(\mathbf{z}_{n|k}, \mathbf{u}_{n|k}),  \label{rc}\\ 
& \;\;\;\; (\mathbf{y}_{e,n|k}^T, v_{e, n|k})  = h(\mathbf{z}_{n|k}),\label{rd}\\
& \;\;\;\; \lambda_{n+1|k} =  \lambda_{n|k} + v_{e,n|k}\cos{(\theta_{e,k|n} - \theta_{p,n|k})}, \label{re}\\ 
& \;\;\;\; \mathbf{e}(\lambda_{n|k}) = (\mathbf{y}_{e,n|k}^T - \mathbf{y}_P^T(\lambda_{n|k}), v_{e,n|k} - v_{ref})^T,\label{rf} \\
&\;\;\;\;  \mathbf{z}_{n|k} \in \mathcal{Z}, \;\;\;\;\mathbf{u}_{n|k} \in \mathcal{U}, \label{rg}\\ 
&\quad \mathbb{P}\{\mathbf{y}_{o,p, n|k} \in \tilde{\mathcal{A}}_{cir}\} \leq \epsilon.\label{rh}
\end{align}
\end{subequations}
Here, the initial conditions are set in (\ref{rb}). The system model is provided in (\ref{rc}) - (\ref{re}), from which the error is calculated in (\ref{rf}) to support path-following by minimization of this error. State and input constraints are given in (\ref{rg}). Lastly, the POC of (\ref{eq_POC_Gaussian}) is calculated each time-step and constrained by a constant POC tolerance $\epsilon$ in (\ref{rh}) to warrant a desired level of safety. Note that implementing the POC as a constraint is common practice in the literature, see, e.g., \cite{schwarting2017safe, goulet2022probabilistic, schreier2016integrated, mustafa2024racp}.

\section{Numerical Case Studies}\label{sec_simulation_example}
This section demonstrates and analyses the proposed approach in two parts. First, the POC estimation approach proposed in Sections \ref{sec_collision_events} - \ref{sec_gaussian_POC_algo} is analyzed and compared to MCS in terms of estimation accuracy and computational efficiency. Second, the path-following SMPC with the POC constraint, as described in Section \ref{sec_SMPC} and using the proposed POC approximation, is demonstrated, where we also compare to the case using MCS.

\subsection{POC Estimation}\label{sec_poc_estimation}
We choose three representative scenarios to demonstrate the proposed method from Section \ref{sec_gaussian_POC_algo}, where the following simulations \textit{do not} use the controller proposed in Section \ref{sec_SMPC}. The first two scenarios, \textit{intersection collision} and \textit{intersection pass}, respectively, represent intersection scenarios, whereas the third scenario, referred to as \textit{oncoming pass}, represents a passing scenario of an oncoming vehicle on a straight road. For all scenarios, we let all vehicles travel at a constant velocity. Regarding the intersection scenarios, the trajectories are perpendicular, meaning their paths intersect. In the \textit{intersection collision} scenario, the actors collide; since the collision itself is not modeled, both vehicles will drive through each other in case of a collision. Here, it is expected that the POC must approach one during the vehicles' overlap (collision) if the uncertainty associated with the objects configuration is sufficiently small\footnote{Suppose the uncertainties $\sigma_{x_o}, \sigma_{y_o}, \sigma_{\theta_o}$ are larger than the respective bounds of $\tilde{\mathcal{A}}_{cir}$, than the POC is less than one, even if the mean values would indicate a collision.}. In the \textit{intersection pass} scenario, the object will pass the intersection before the ego vehicle. In the \textit{oncoming pass} scenario, the object and the ego vehicle pass each other in the oncoming lanes; here, the width of the vehicles must be tightly accounted for by the shape approximations. Note that in each simulation, we approximate both vehicles with the same number of circles; e.g., if two circles approximate both vehicles, we call this the two-circle case. We choose the length and width of a standard European mid-sized car to parameterize both rectangles. To parameterize (\ref{eq_POC_Gaussian}), we select the object's ground-truth configurations as mean values and model the standard deviations $\Sigma_o = (\sigma_{x_o}, \sigma_{y_o}, \sigma_{\theta_o})$ by a distance-dependent logistic function. This simulates the effect that the ego generally estimates configurations more accurately of closer objects (note that the case of fixed configurations and various constant uncertainties is analyzed in Section \ref{sec:poc_accuracy}). Hence, the standard deviations are given by: 
\begin{equation*}
    \Sigma_o(d_k) = \frac{1}{1+ \exp{[-\gamma (d_k - d_0)]}}\Sigma_{max},
\end{equation*}
where $\gamma, d_0$ are free parameters, $d_k=\|\mathbf{x}_{k,e}-\mathbf{x}_{k,o}\|$ and $\Sigma_{max} \in \mathbb{R}^{3}$ has positive entries representing the maximum standard deviations. Additionally to the POC estimation for a varying number of circles, we also compute the ground truth POC by applying MCS on both rectangles. For the MCS sampling on rectangles, the separating axis theorem \cite{Boyd_Vandenberghe_2004} is used to check for collisions. All simulation parameters can be found in the Appendix \ref{appendixB}. 
\subsubsection{POC Simulation Results}
The time evolution of the POCs for four different circular approximations are presented in Figure \ref{results_all_scenarios}.
\begin{figure}[t!]
\begin{center}
\includegraphics[width=9cm]{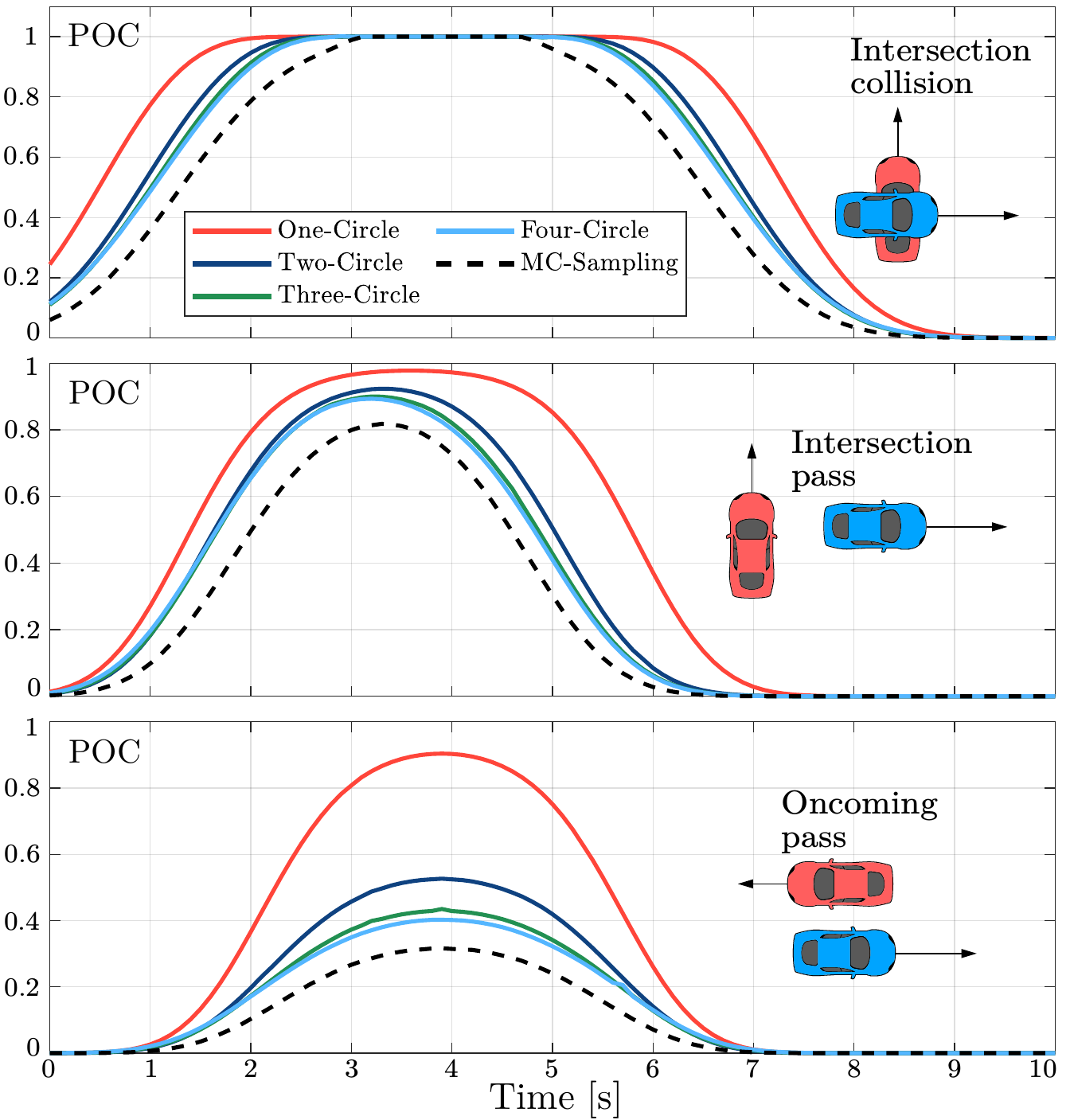}  
\caption{Resulting POCs for varying amount of circles in three scenarios; the ego vehicle is depicted in blue and the object vehicle in red.}
\label{results_all_scenarios}
\end{center}
\end{figure}
The number of samples for the MCS method is set to $10^6$, so the MC-sampling results can be considered the ground-truth-POC\footnote{For MC-Sampling, the error scales by a factor of $1/\sqrt{N}$, where  $N$ is the number of samples. Hence, for $10^6$ samples the error is $< 10^{-3}$.} for rectangular shapes. In the \textit{intersection collision} and \textit{intersection pass} scenarios, the differences between the two-circle and four-circle cases are marginal. However, in the \textit{oncoming pass} scenario, the difference between the two-circle and three-circle case is noticeable, peaking at approximately ten percentage points. Generally, the difference between the three-circle and the four-circle case is insignificant for the given scenarios and vehicle shapes. Figure \ref{results_all_scenarios} also shows that the proposed POC estimate over-approximates the real POC and that the approximation can represent the evolution of the actual POC well in the chosen scenarios. Summarizing, the approximation error depends on the scenario parameters, i.e., the constellation of how both vehicles approach each other. Further, beyond the three-circle approximation, the advantages of adding more circles vanish for the given vehicle sizes. 
\begin{figure}[t!]
\begin{center}
\includegraphics[width=8.4cm]{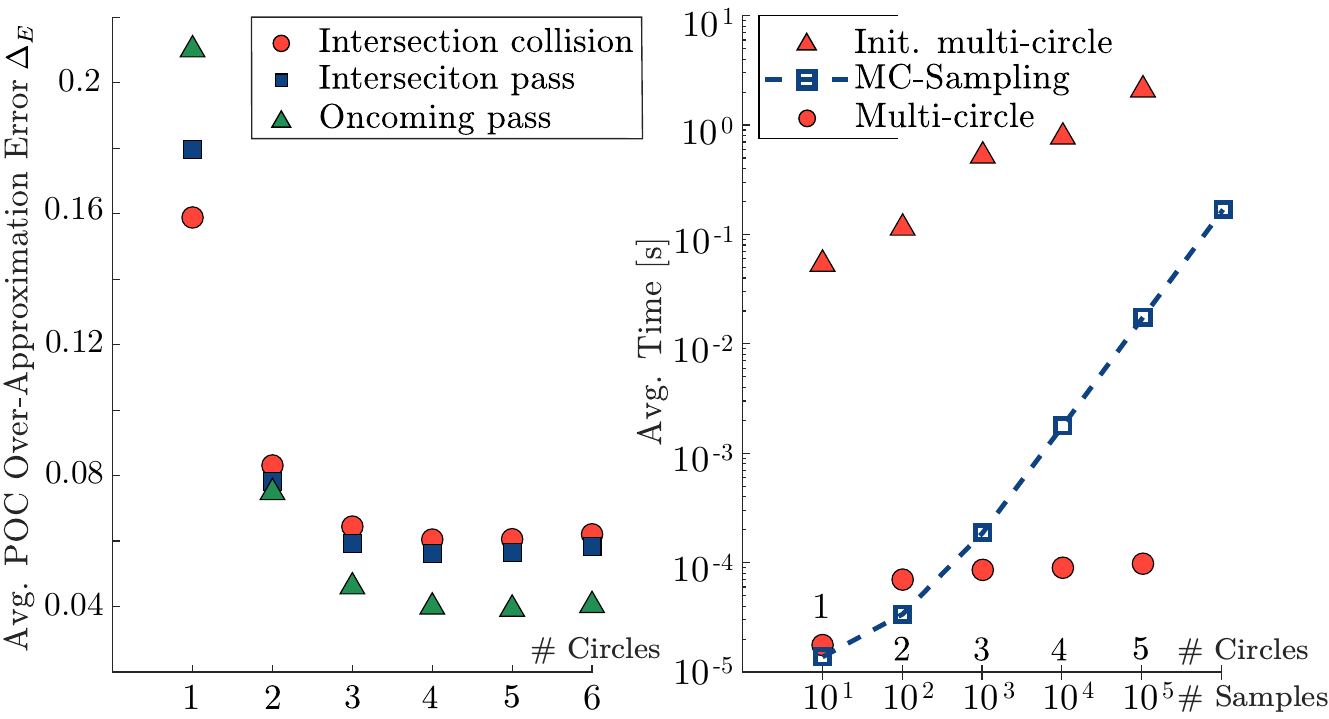}  
\caption{POC over-approximation error (left) and average runtime (right). In the right figure, the red circles and triangles relate to the number of circles on the $x$-axis, whereas MCS relates to the number of samples on the $x$-axis.}
\label{fig_error_runtime}
\end{center}
\end{figure}
\begin{figure}
\begin{center}
\includegraphics[width=8.4cm]{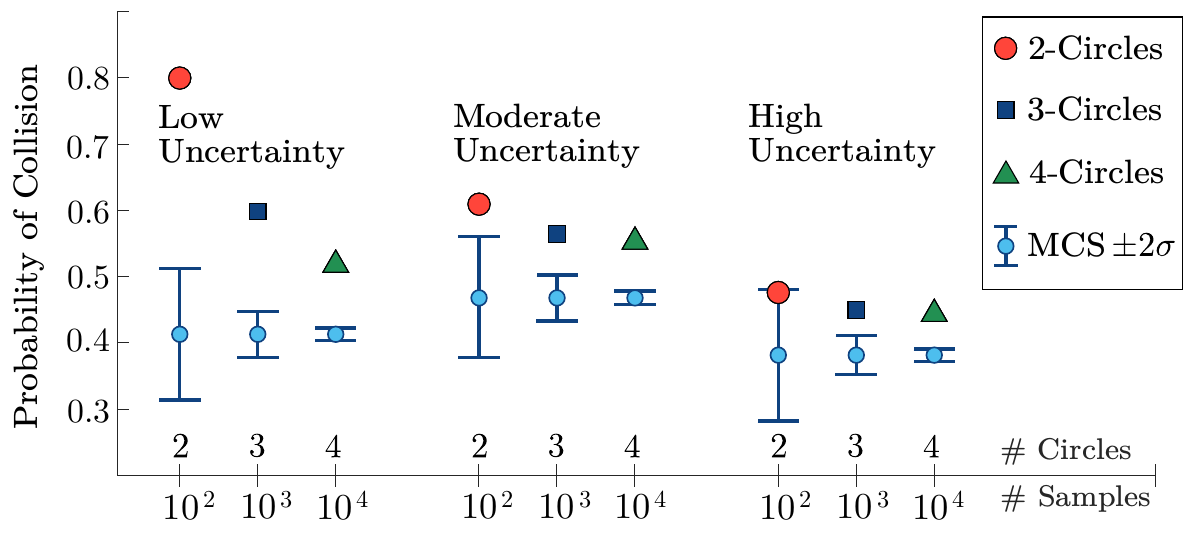}  
\caption{Estimation accuracy of the multi-circle approach and MCS for varying levels of uncertainty. Note that the MCS fluctuations, highlighted by the $\pm2\sigma$ error bars, for $10^3$ samples are too large for optimization-based motion planning (see Figure~\ref{fig_SMPC_MCS}).}
\label{fig_accuracy}
\end{center}
\end{figure}
\subsubsection{POC Estimation Accuracy and Computational Efficiency}\label{sec:poc_accuracy}
Clearly, the computational scaling of the POC algorithm stands in a trade-off with estimation accuracy. The scaling depends on the amount of intersection angle intervals $N_I$ and samples for numerical integration $N_s$. We achieved the best performance when Algorithm \ref{alg:poc_overview} is implemented in C$++$ in two steps\footnote{The programming code can be accessed online here: \url{https://github.com/Tolksdorf/Collision-Probaility-Estimation.git}.}: first, lines 1-5 are computed using the Eigen library \cite{eigenweb} for vectorization. Second, lines 6-34 are programmed symbolically with CasADi \cite{Andersson2019}. This choice yields excellent performance, as a symbolic expression for $\mathbb{P}\{\mathbf{y}_{o,p} \in \tilde{\mathcal{A}}_{cir}\}$ is pre-computed at initialization. To retrieve a POC value at run-time, the object mean $\mu_o$ and variance $\Sigma_o$ are substituted into the symbolic expression, and the expression, which is essentially an extensive summation, is evaluated. We implemented MCS on two colliding rectangles to assess the computational efficiency as a benchmark with the algorithm proposed by \cite{lambert2008collision}. We assessed several implementations and found that Matlab's automatic C$++$ code generation was the most efficient. We used a notebook with an Intel i7-9850H processor and \qty{32}{GB} of memory as a computational platform. The average time over $10^4$ POC computations for a given multi-circle case is provided in Figure \ref{fig_error_runtime} (right figure), for $N_s \times N_s = 20 \times 20$ samples. Here, we separated the initialization time (red triangles) from the POC estimation time for a given mean $\mu_o$ and variance $\Sigma_o$ (red dots) because the initialization must only be performed once for a given geometry and desired accuracy. For each POC estimation, we randomly drew a mean $\mu_o$ and variance $\Sigma_o$ from a uniform distribution so that each POC estimation featured distinctly different values for  $\mu_o$ and $\Sigma_o$. The absolute computation time for the initialization and POC estimation mainly depends on the number of samples $N_s$ the two-dimensional integral is evaluated over; the number of circles changes the POC estimation time insignificantly.  As expected, the scaling of estimating the POC with MCS behaves linearly according to the number of Monte Carlo samples. For more than $10^3$ Monte Carlo samples, which are typically needed for accurate POC estimation and motion planning (as we will show later), the MCS approach takes significantly longer to compute than the multi-circle approach. For example, $10^4$ Monte Carlo samples compute more than 20 times slower than any number of circles in Figure \ref{fig_error_runtime} (right).\\
The left figure in Figure \ref{fig_error_runtime} shows the average over-approximation error given the all the one-circle to six-circle cases. For that analysis, we considered MCS on two rectangles with $10^6$ samples as the ground truth POC. With this information at hand, we can compute the average error $\Delta_E$ for $T$ simulation time-steps as
\begin{equation*}
    \Delta_E = \frac{1}{T}\sum_{k= 0}^T (\mathbb{P}_k\{\mathbf{y}_{o,p} \in \tilde{\mathcal{A}}_{cir}\} - \mathbb{P}_k\{\mathbf{y}_o \in \tilde{\mathcal{A}}_{rect}\}).
\end{equation*}
Figure \ref{fig_error_runtime} (left) shows that placing more than three circles on each actor yields insignificant average estimation accuracy differences. Also, increasing the number of circles does not necessarily imply a lower approximation error, as geometric conditions (\ref{eq_placing}) imply the smallest possible radii given a number of circles and not the smallest enclosure for any number of circles. An inherent disadvantage of MCS is that the result fluctuates around the ground truth POC, which is generally undesired from a safety and optimization algorithm standpoint. Contrarily, the proposed POC algorithm is deterministic concerning the input parameters, i.e., for given uncertainty parameterization and vehicle configurations, the algorithm will always return identical values and it is guaranteed to be an over-approximate. This is illustrated in Figure \ref{fig_accuracy}. Here, we fixed the vehicle's configurations and set three different uncertainties, low, moderate, and high, respectively (see Appendix \ref{appendixC} for the associated numerical values). We used three different numbers of MCS samples for each uncertainty level and three different circle-to-circle approximations. For MCS, we repeated the estimation $10^4$ times and calculated the standard deviation $\sigma$ to assess the fluctuations caused by the MCS method. As expected, more circles give a more accurate POC estimation in case of low to moderate uncertainty, though the advantage diminishes for more than three circles for higher uncertainties. The MCS fluctuations are depicted by the $\pm 2\sigma$ error bars. The fluctuations are significant for less than $10^4$ MCS samples, confirming suspicions that the result can be severely under-approximated for fewer samples. In conclusion, the MCS fluctuations are manageable for more than $10^4$ samples, which is about a \textit{factor of 23 slower} to compute than the multi-circle approach with a three-circle approximation for our implementations. The three-circle approximation yields a satisfactory over-approximation error, small enough to, e.g., maneuver around other vehicles at close distances. 
 
\subsection{Stochastic Model Predictive Control}\label{sec_results_SMPC}
The main challenge of SMPC, regardless of the POC estimation method, is ensuring recursive feasibility. While there are no theoretical guarantees of recursive feasibility (i.e., for any PDF), practitioners are also dealing with challenges in the numerical implementation, e.g., solvers getting stuck in local minima. Here, solvers which can deal with non-convex problems are often unpractical with respect to computational efficiency. We used the software Matlab, and the recommendations of \cite{matlab_optimization_documentation} to \textit{robustify} the SMPC controller against the problem of getting stuck in local minima. For the MCS approach, we expect, however, that the fluctuation of the MCS result for finite samples (see Figure \ref{fig_accuracy}) will cause further feasibility issues. 
 
\subsubsection{Simulation Scenario}
We chose an overtaking scenario to demonstrate the benefits of our proposed POC estimation approach in the SMPC context. The ego's path is a straight line parallel to the global $x$-axis. The ego is tasked with following it with a reference velocity $v_{ref} =$ \qty{6}{m/s}. Some space ahead, an object is also following the path with a velocity of $v_o =$ \qty{2}{m/s}. Therefore, the path-following SMPC must balance the path-following error with the velocity error while adhering to the POC constraint. By weighting the velocity error higher than the positional error, it is expected that the SMPC should generate an overtaking maneuver. The ego detects the object's configuration with some initial uncertainty $\Sigma_{o,0} =(\sigma_{x_{o,0}}, \sigma_{y_{o,0}}, \sigma_{\theta_{o,0}})$, which is assumed to grow linearly over the prediction horizon with an additive diagonal term $Q \in \mathbb{R}^{3}$, i.e., $\Sigma_{o, n|k} = \Sigma_{o,0} + (n-k)Q$, where $n$ is the prediction step. This models the effect that predictions further in the future are generally more uncertain. 
Note that our model does not consider the road layout and traffic rules. The motion dynamics (\ref{eq:sys}) are given by a discrete-time unicycle model, where the inputs are the velocity $v_e$ and turn rate $\omega_e$ and the outputs are configurations $\mathbf{y}_e$. All parameters are given in the Appendix \ref{appendixD}.
\begin{figure*}[t]
    \centering
    \begin{subfigure}[b]{1\textwidth}
        \includegraphics{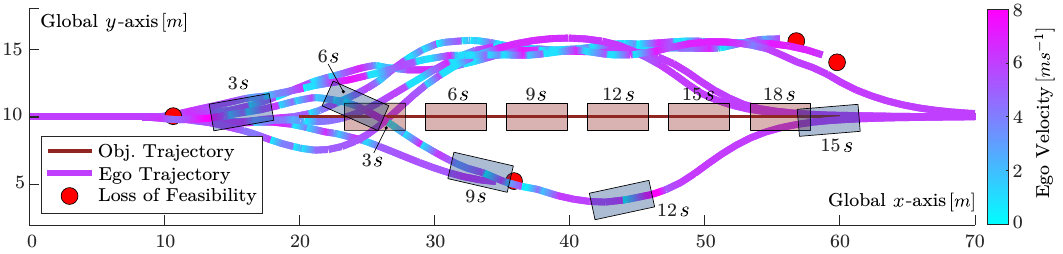}  
        \caption{Eight simulated overtaking maneuvers were performed using MCS estimation of the POC. The uncertainty is identical in each simulation. Note that the SMPC planner lost feasibility in four simulations, denoted by the red dot. Also, the ego vehicle's shape is depicted only for one simulation to preserve clarity.}
        \label{fig_SMPC_MCS}
    \end{subfigure}\hfill
    \begin{subfigure}[b]{1\textwidth}
        \includegraphics{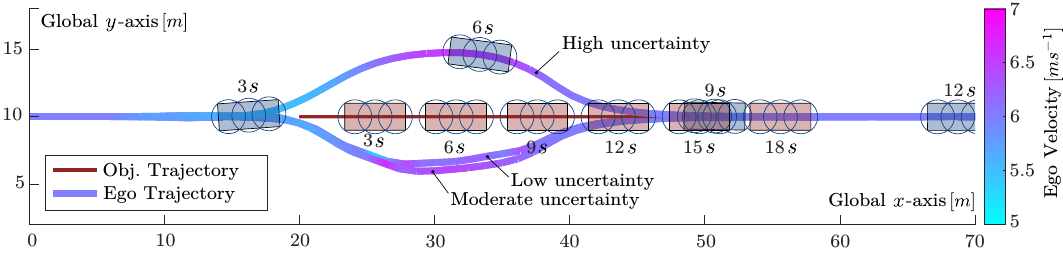}  
        \caption{Simulation of the overtaking scenario where the POC is estimated by the multi-circular approach for three different levels of uncertainties. Note that the scaling of the velocity bar differs from Figure \ref{fig_SMPC_MCS}. Also, the ego vehicle's shape is depicted only for the \textit{high uncertainty} case to preserve clarity.}
        \label{fig_SMPC_AIN}
    \end{subfigure}
    \caption{SMPC with estimating the POC by MCS and by the multi-circular approach.}
\end{figure*}
\subsubsection{Monte Carlo Sampling}

While it is suspected that estimating the POC with MCS is challenging due to the aforementioned concerns regarding POC fluctuations, we still simulate the overtaking scenario with MCS to confirm the practical relevance of the theoretical concerns. We used MCS with $10^3$ samples, as in that case the computational effort is only roughly a factor of two slower to the three-circle approximation (see right figure in Figure \ref{fig_error_runtime}). We simulated the overtaking scenario eight times, with identical scenario parameters and the uncertainty set to a moderate level (see Appendix \ref{appendixD}). Figure \ref{fig_SMPC_MCS} shows that each resultant ego trajectory is distinct and that four trajectories lead to a loss of the recursive feasibility. Also, the planner is varying the input signals significantly, which can be seen by the rapidly changing velocities and heading angles.

\subsubsection{Multi-Circle Approximation}
For the proposed multi-circle POC estimation, we, contrasting to Figure \ref{fig_SMPC_MCS}, chose to simulate three different levels of uncertainty because, as expected, for the same parameters and uncertainty, the planner always generates the exact same trajectory. The results are given in Figure \ref{fig_SMPC_AIN} for the three-circle case. The results show smooth overtaking maneuvers without the loss of recursive feasibility, strongly contrasting to Figure \ref{fig_SMPC_MCS}. Lastly, increased uncertainty leads the controller to generate more conservative trajectories, i.e., keeping more distance to the object vehicle, which is generally desired. 

\section{Discussion of the Results}\label{sec_discission}
The theoretical results predict that the POC is over-approximated when both actor's shapes are covered with overlapping circles, given certain geometric conditions. The simulation study indeed confirms this in all tested scenarios. Rather interesting, however, is the finding that the over-approximation error depends on the scenario parameters, e.g., how both actors approach each other. Further, it appeared that approximations with more than three circles each for the chosen vehicle sizes do not lead to further gains in estimation accuracy. At the same time, the authors argue that the over-approximation error is not too conservative for any motion planning application as the error for more than two circles is low and predetermined by the geometry and uncertainty.\\
Another aspect addressed in this article are the drawbacks of MCS as the state-of-the-art method for POC estimation. These drawbacks are two-fold: MCS is computationally intensive and it is not guaranteed to always over-approximate the POC for a finite amount of samples. That is, it is undesirable to under-approximate the POC for safety-critical applications, which MCS is prone to do. 
Regarding computational efficiency, we found that our proposed algorithm (see Algorithm \ref{alg:poc_overview} - \ref{alg:SortInts} is highly efficient in comparison to our MCS implementation. This is due to the algorithm structure that leverages vectorization and symbolic pre-computation; so only a symbolic expression needs to be evaluated at run-time. Further, for a set numerical integration accuracy and geometry, the number of circles only affects the initialization time, which in practice is irrelevant as a pre-computed symbolic POC expression only requires very little memory to store. The main disadvantages of MCS are highlighted in Figure \ref{fig_accuracy}, as its mean value may be accurate, though the fluctuations for limited numbers of samples have direct impact for motion planning, see Figure \ref{fig_SMPC_MCS}. While half of the trajectories led to a loss of feasibility, the trajectories themselves were rather erratic, and each was distinctly different. In contrast, the proposed method yields smooth trajectories (see Figure \ref{fig_SMPC_AIN}) and no loss of feasibility for any trajectory. \\
Further, we highlight the ease of working with the proposed method, as the reproducable smooth trajectories and low computational effort greatly help during the parametrization of the SMPC. This is also expected to affect AV safety and development, as potential bugs and errors can consistently be reproduced because the planner behaves deterministically between repetitive simulations for the same parameters. Based on our findings, we argue that our approach is a superior POC estimation method compared to MCS for SMPC applications.

\section{Conclusions and Future Work}\label{sec_conclusions_and_future_work}
In this paper, we present a method to over-approximate the probability of collision (POC) tailored for optimization-based motion planning algorithms. We thereby address the shortcomings of other methods, e.g., Monte Carlo sampling (MCS), where the computational efficiency, the lack of guarantees not to under-approximate the POC, and fluctuating POC estimation results leading to inconsistent motion plans are known problems.\\
The proposed method utilizes a multi-circular shape approximation for actors where one actor's position and heading angle are uncertain. We calculate the POC for the multi-circular shape approximation and show that it is an over-approximation of the POC of two rectangles colliding. We propose a POC algorithm for Gaussian uncertainties to emphasize the practical applicability of our theoretical results. Here, our selected order of integration allows us to reduce a three-dimensional integral to a two-dimensional integral, improving computational efficiency. Our POC algorithm is demonstrated and compared against MCS, where we confirm the obtained theoretical results and find that three-circle approximations are sufficiently accurate for motion planning applications while being computationally more than twenty times faster than our MCS implementation with $10^4$ samples. Lastly, we compare our method against MCS with a stochastic model predictive controller in an overtaking scenario. We observe that MCS is insufficient as it often renders the controller infeasible, while generating erratic, and none-reproducible trajectories caused by the inherent POC result fluctuations. Contrarily, our method generates smooth, reproducible trajectories and does not suffer from infeasibility issues. Further, our method leads the controller to deal with higher uncertainty by displaying a more conservative driving behavior. We conclude that our method is superior to MCS for SMPC applications, as it computes faster and generates reproducible, smooth trajectories.\\
For future work, computing the POC from different types of probability density functions that accommodate more complex object behaviors is of interest, as this article only in detail derives the POC for Gaussian distributed uncertainty. 

\bibliography{lib}
\bibliographystyle{ieeetr}
\appendices 
\section{Derivation of the Heading Angle Bounding}\label{appendixA} Consider the rectangle of sides $L_o, R$, and $\rho'$ in Figure \ref{fig_offset_circle}(c). Denote the inner angle between the sides $L_o$ and $\rho'$ by $\theta'$. With the law of cosine it holds that 
\begin{equation}\label{app_1}
    \frac{L_o^2 + \rho'^2 - R^2}{2L_o \rho'} = \cos{(\theta')}.
\end{equation}
Note that the left-hand side of (\ref{app_1}) satisfies the triangle inequality, i.e., $ \frac{L_o^2 + \rho'^2 - R^2}{2L_o \rho'} \in [0, 1]$ for $\rho' \in [0, L_o + R]$, restricting $\theta'_o$ to positive real values.
The lower-bound $\underline{\theta}$ and upper-bound $\overline{\theta}$, characterizing the first and last contact point between both circles, are given by 
\begin{equation} \label{app_2}
    \underline{\theta} = \phi' + \pi -\theta', \; \text{and } \overline{\theta} = \phi' + \pi + \theta'
\end{equation} 
(see Figure \ref{fig_offset_circle}(c)). By rearranging and substituting (\ref{app_2}) into (\ref{app_1}) the expression of the heading angle bounds (\ref{eq_circle_v_offset_circle_heading_bounds}) is retrieved. 
\section{POC Simulation}\label{appendixB}
The following parameters are used for the simulations performed in Section \ref{sec_poc_estimation}:
\textit{uncertainty}: $\gamma = 1, d_0 = 1, \Sigma_{max} =Diag(1, 1, 1)$. \textit{Vehicle dimensions (both vehicles)}: $w = 2, l = 4.5$ Scenario \textit{intersection crash}: \textit{Ego vehicle}: initial configuration $\mathbf{y}_{e,0} = (0, 4, 0)^T$, velocity $v_e = 1$ turn rate $\omega_e = 0$. \textit{Object}: initial configuration $\mathbf{y}_{o,0} = (4, 0, \pi/2)^T$, velocity $v_o = 1$ turn rate $\omega_o = 0$. Scenario \textit{intersection crash}: \textit{Ego vehicle}: initial configuration $\mathbf{y}_{e,0} = (0, 4, 0)^T$, velocity $v_e = 1$ turn rate $\omega_e = 0$. \textit{Object}: initial configuration $\mathbf{y}_{o,0} = (6, 0, \pi/2)^T$, velocity $v_o = 1.5$ turn rate $\omega_o = 0$. Scenario \textit{oncoming pass}: \textit{Ego vehicle}: initial configuration $\mathbf{y}_{e,0} = (0, 0, 0)^T$, velocity $v_e = 1$ turn rate $\omega_e = 0$. \textit{Object}: initial configuration $\mathbf{y}_{o,0} = (8, 3.5, \pi)^T$, velocity $v_o = 1$ turn rate $\omega_o = 0$. 
\section{POC Accuracy}\label{appendixC}
The following parameters are used for the simulations performed in Section \ref{sec:poc_accuracy}:
Ego configuration $\mathbf{y}_{e} = (0, 0, 0)^T$, Object configuration $\mathbf{y}_{o} = (2.5, 2.5, 0)^T$, vehicle dimensions (both vehicles): $w = 2, l = 4.5$,
low uncertainty: $Diag(0.5, 0.5, 0.5)$, moderate uncertainty: $Diag(1.5, 1.5, 1.5)$, high uncertainty: $Diag(2.5, 2.5, 2.5)$.
\section{Parametric settings SMPC}\label{appendixD}
The following parameters are used for the simulations performed in Section \ref{sec_results_SMPC}:
\textit{Vehicle dimensions (both vehicles)}: $w = 2, l = 4.5$, \textit{Ego vehicle}: initial configuration $\mathbf{y}_{e,0} = (0, 10, 0)^T$, input constraints $\mathcal{V}_e = [0, 10], \mathcal{U}_e = [-1, 1]$, weightings in cost function: $W = Diag(1, 1, 10, 10)$, sample time $0.2$, prediction horizon $N_P = 10$, MCS samples $10^3$, POC tolerance $\epsilon = 0.2$, low uncertainty: $\Sigma_{o,0} = Diag(0.1, 0.1, 0.1), Q = Diag(0.01, 0.01, 0.01)$, moderate uncertainty: $\Sigma_{o,0} = Diag(0.1, 0.1, 0.1), Q = Diag(0.3, 0.3, 0.3)$, high uncertainty: $\Sigma_{o,0} = Diag(0.5, 0.5, 0.5), Q = Diag(0.5, 0.5, 0.5)$. \textit{Object vehicle}: initial configuration $\mathbf{y}_{e,0} = (20, 10, 0)^T$, velocity $v_o = 2$, turn rate $\omega_o = 0$. 
\vspace{\negSpace}
\begin{IEEEbiography}[{\includegraphics[width=1in,height=1.25in,clip,keepaspectratio]{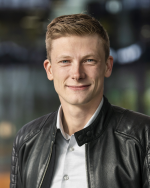}}]{Leon Tolksdorf} received the B.Eng. degree in automotive engineering and management and the M.Sc. degree in automotive engineering from the Munich University of Applied Sciences, in 2017 and 2019, respectively. He is currently pursuing a Ph.D. degree with the Dynamics and Control Group at the Eindhoven University of Technology, Eindhoven The Netherlands. Since 2020 he is a research assistant at the CARISSMA Institute of Safety in Future Mobility in Ingolstadt, Germany. His main research interests are stochastic optimization for motion planning, computation of stochastic risk measures, and risk-aware automated vehicle behavior generation in single and multi-agent control. 
\end{IEEEbiography}
\vspace{\negSpace}
\begin{IEEEbiography}[{\includegraphics[width=1in,height=1.25in,clip,keepaspectratio]{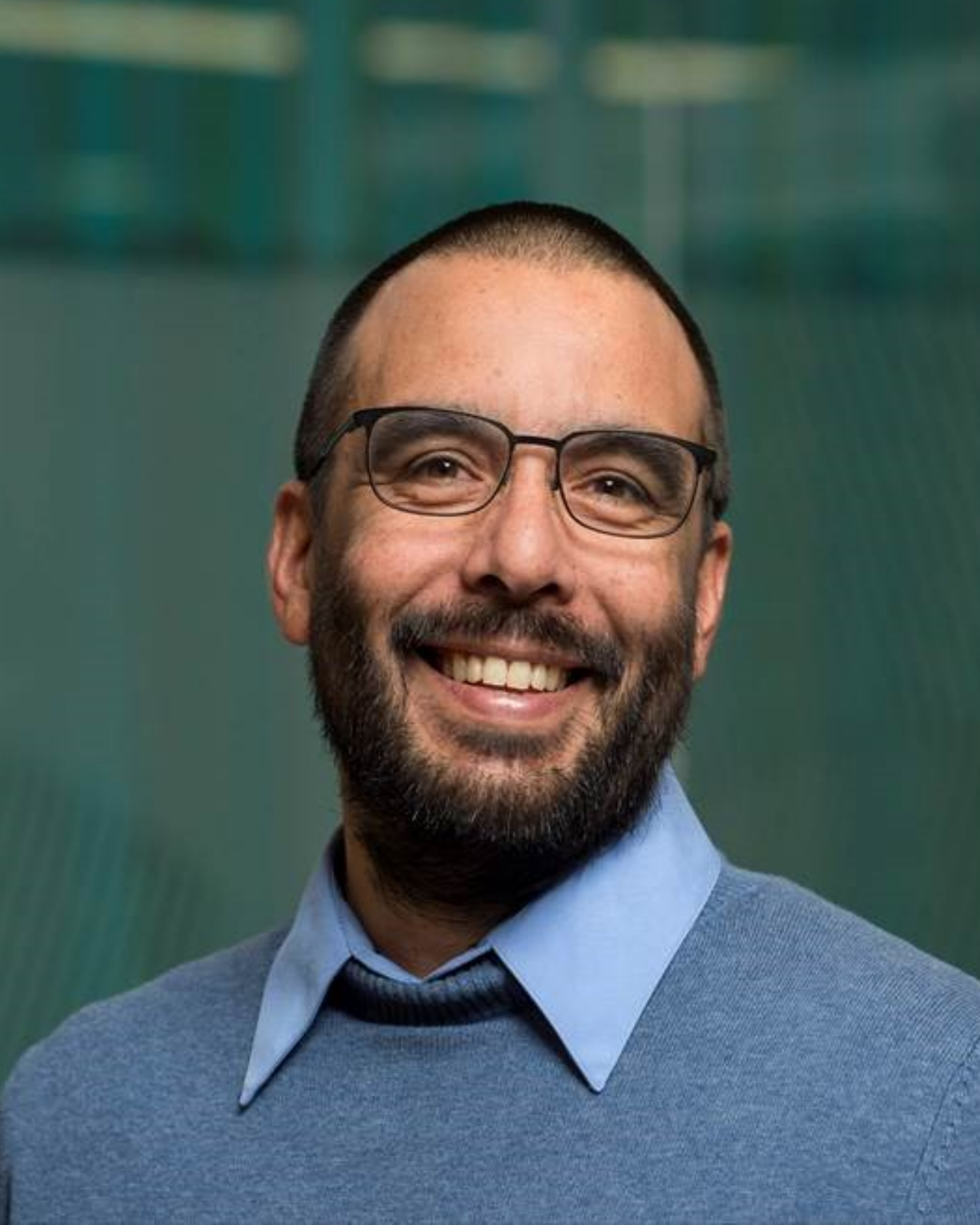}}]{Arturo Tejada} holds a BS degree in Electrical Engineering from the Pontificia Universidad Católica del Perú and MSc. and Ph. D. degrees in Electrical Engineering from Old Dominion University in Norfolk, Virginia (2006), where he specialized in hybrid system theory. He is the author of over 50 scientific publications in top peer-reviewed conferences and journals. He is a Part-time Assistant Professor of Safe Autonomous and Cooperative Vehicles at the Mechanical Engineering Department at Eindhoven University of Technology (TU/e) and a Senior Scientist at the Integrated Vehicle Safety Department of TNO in The Netherlands. His work focuses on "teaching" self-driving vehicles how to interact with human drivers safely and socially. This is done by developing reference models of human driving from which design requirements for automated driving functions can be extracted and demonstrated. His work sits at the intersection of human factors, artificial intelligence, and motion control of automated vehicles.
\end{IEEEbiography}
\vspace{\negSpace}
\begin{IEEEbiography}[{\includegraphics[width=1in,height=1.25in,clip,keepaspectratio]{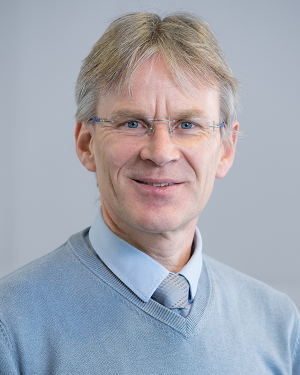}}]{Christian Birkner} holds a distinguished academic record and professional expertise in Mechanical Engineering. He acquired both his Dr.-Ing.-degree in Mechanical Engineering from the Technical University of Kaiserslautern in 1994. Commencing his professional career in 1995, Birkner accumulated a significant breadth of industrial experience through his work in powertrain design and control at SiemensVDO, IAV, and MAHLE until 2017. His work involved substantial contributions to the advancement of powertrain technology, solidifying his credentials as a mechanical engineer in the industry. In 2017, he transitioned into academia as a Professor and Chair of the Test Methods for Vehicle Safety, Vehicle Systems, and Control department at THI. His work in this position involves fostering a research environment focused on integrated vehicle safety.
\end{IEEEbiography}
\vspace{\negSpace}
\begin{IEEEbiography}[{\includegraphics[width=1in,height=1.25in,clip,keepaspectratio]{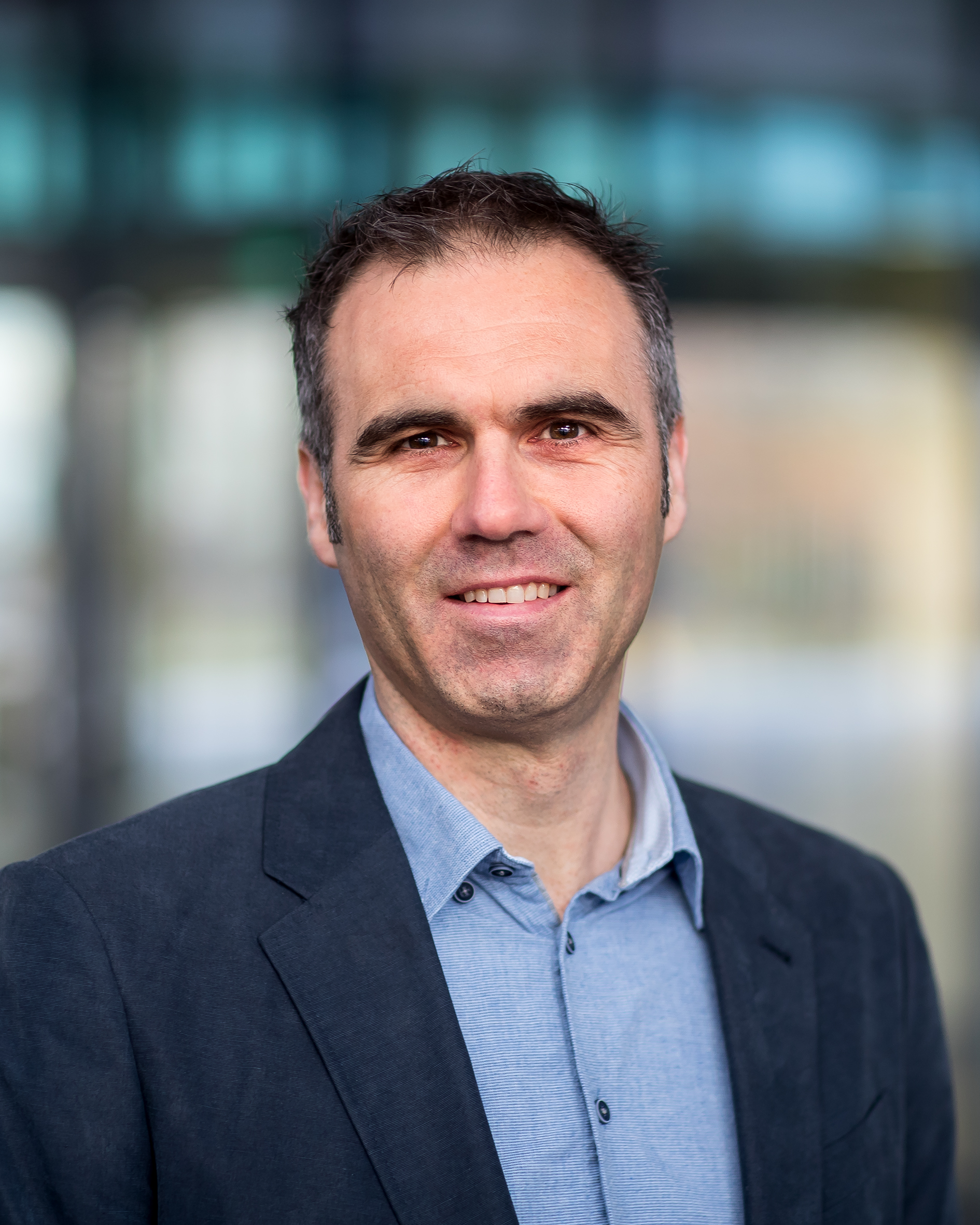}}]{Nathan van de Wouw} obtained his M.Sc.-degree (with honours) and
Ph.D.-degree in Mechanical Engineering from the Eindhoven
University of Technology, the Netherlands, in 1994 and 1999,
respectively. He currently holds a full professor position at the Mechanical Engineering Department of the Eindhoven University of Technology, the Netherlands. He has been
working at Philips Applied Technologies, The Netherlands, in
2000 and at the Netherlands Organisation for Applied Scientific
Research, The Netherlands, in 2001. He has been a visiting
professor at the University of California Santa Barbara,
U.S.A., in 2006/2007, at the University of Melbourne,
Australia, in 2009/2010 and at the University of Minnesota,
U.S.A., in 2012 and 2013. He has held a (part-time) full professor position at the Delft University of Technology, the Netherlands, from 2015-2019.
He has also held an adjunct full professor position at the University of Minnesota, U.S.A, from 2014-2021.
He has published the books 'Uniform Output Regulation of
Nonlinear Systems: A convergent Dynamics Approach' with A.V.
Pavlov and H. Nijmeijer (Birkhauser, 2005) and `Stability and
Convergence of Mechanical Systems with Unilateral Constraints'
with R.I. Leine (Springer-Verlag, 2008). In 2015, he received the IEEE Control Systems Technology Award "For the development and application of variable-gain control techniques for high-performance motion systems". He is an IEEE Fellow for his contributions to hybrid, data-based and networked control.

\end{IEEEbiography}
\end{document}